\documentclass{article}

\usepackage{PRIMEarxiv}

\usepackage{amsmath,amsfonts,bm}

\usepackage{amsmath,amssymb,amsthm,mathtools}

\newtheorem{assumption}{Assumption}
\newtheorem{theorem}{Theorem}
\newtheorem{corollary}{Corollary}
\newtheorem{lemma}{Lemma}

\def\eqref#1{equation~\ref{#1}}

\def\1{\bm{1}}

\DeclareMathAlphabet{\mathsfit}{\encodingdefault}{\sfdefault}{m}{sl}
\SetMathAlphabet{\mathsfit}{bold}{\encodingdefault}{\sfdefault}{bx}{n}

\usepackage{natbib}
\usepackage[utf8]{inputenc} 
\usepackage[T1]{fontenc}    
\usepackage{hyperref}       
\usepackage{url}            
\usepackage{booktabs}       
\usepackage{amsfonts}       
\usepackage{nicefrac}       
\usepackage{microtype}      
\usepackage{lipsum}
\usepackage{fancyhdr}       
\usepackage{graphicx}       
\graphicspath{{media/}}     
\usepackage{amsmath}
\usepackage{wrapfig}
\usepackage{float}
\usepackage{caption}
\usepackage{hyperref}
\usepackage{url}
\usepackage{booktabs}
\usepackage{ulem}
\usepackage{mathrsfs}
\usepackage{graphicx} 
\usepackage{xcolor}
\usepackage{subcaption}
\usepackage{algorithm}
\usepackage{algpseudocode}
\title{Continuous-Time Trajectory Generation from Discrete Observations with Stochasticity}

\author{Ruifeng Shang \thanks{Equal contribution.}\\
Department of Statistics\\
University of Chicago\\
Chicago, IL 60637, USA \\
\texttt{rfshang@uchicago.edu} \\
\And
Shu Liu \footnotemark[1]\\
Department of Mathematics \\
Florida State University \\
Tallahassee, FL 32306, USA \\
\texttt{sliu11@fsu.edu} 
\And
Yuhua Zhu \thanks{Corresponding author.} \\
Department of Statistics and Data Science \\
University of California, Los Angeles \\
Los Angeles, CA 90095, USA \\
\texttt{yuhuazhu@ucla.edu} 
}

\begin{document}
\maketitle

\begin{abstract}
Physical systems evolve continuously in time, yet their states are typically observed only at discrete times. Generating trajectories consistent with their probability densities from such observations therefore requires capturing the continuous-time evolution rather than only learning transition mappings between consecutive observations. We propose \textbf{PhiBE-Flow}, a framework that directly estimates the probability velocity field induced by the stochastic differential equation (SDE) which governs this continuous-time distributional evolution. PhiBE-Flow learns from discrete observations using a model-free approach requiring neither known SDE coefficients nor score estimation. We establish convergence guarantees for the method, accounting for both time-discretization and finite-sample errors. We evaluate PhiBE-Flow on systems of increasing complexity, from controlled stochastic numerical systems to Navier--Stokes dynamics and real-world videos. Our results show that PhiBE-Flow accurately recovers probability flows of stochastic dynamics, preserves multiscale physical statistics, and improves video generation performance over representative baselines. The code is available at \url{https://github.com/R1fe/PhiBE-Flow}.
\end{abstract}

\keywords{Generative modeling, Continuous-time generation, Stochastic dynamical systems, Probability flows, Video generation}

\section{Introduction}
\begin{figure}[!ht]
    \centering
    \includegraphics[width=\linewidth]{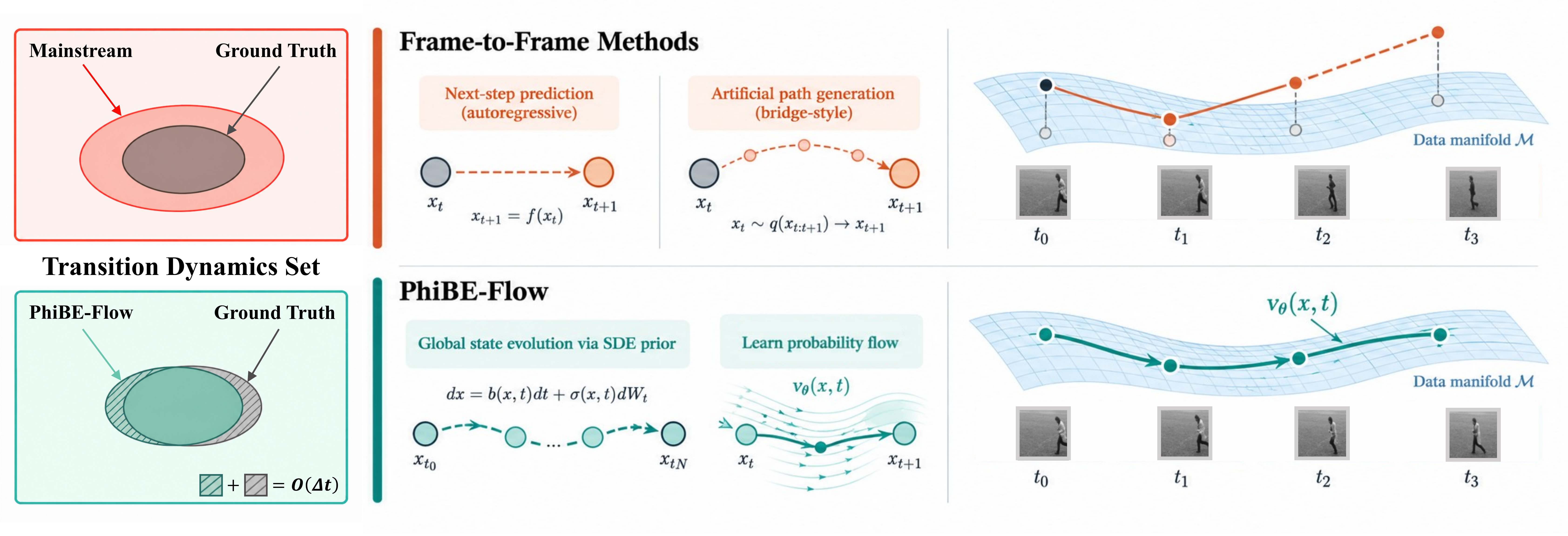}
    \caption{\textbf{Overview of PhiBE-Flow.}
        Top: frame-to-frame methods learn relations between sampled
        states.
        Bottom: PhiBE-Flow directly learns the SDE-induced probability
        velocity.}
    \label{fig:overview}
\end{figure}
Real-world physical systems are typically observed through discrete
measurements. Learning their dynamics has attracted growing interest
across applications in mechanics
\citep{greydanus2019hamiltonian,lutter2019deep},
fluid dynamics
\citep{brunton2020machine,kochkov2021machine,buaria2023forecasting}, and video modeling \citep{oprea2020review, finn2016unsupervised, lee2018stochastic}. Their underlying dynamics, however, often evolve continuously in physical time and may be stochastic or chaotic. Learning these systems therefore calls for capturing their continuous-time evolution beyond modeling transition mappings between sampled states. This is particularly important for video, where visual realism at the frame level does not necessarily imply coherent motion or consistency with the underlying physical dynamics
\citep{bansal2026videophy,gu2026phyworldbench,motamed2026generative}.

Recent advances in generative modeling have opened new avenues for learning dynamical systems from discrete observations \citep{ho2022video,lippe2023pde,price2025probabilistic}. Autoregressive models learn conditional transitions between successive states or tokens, using architectures such as recurrent neural networks \citep{ranzato2014video, vlachas2018data} and Transformers \citep{geneva2022transformers,yan2021videogpt}. Bridge-based methods instead introduce an auxiliary path between neighboring states \citep{albergo2025stochastic}, with diffusion models providing a prominent example through an artificial noising--denoising process \citep{ho2020denoising,song2020score}. Despite their success, these approaches do not explicitly model the underlying continuous-time evolution in physical time. Neural ordinary differential equations (ODEs) and stochastic differential equations (SDEs) address this limitation by learning continuous-time dynamics \citep{li2020scalable,djeumou2023physics}. For stochastic systems, however, the underlying drift and diffusion are generally unknown, and the corresponding probability velocity additionally depends on the unknown time-dependent density score. This raises a central question: Can we learn this velocity directly from discrete observations without separately estimating these quantities?

We propose \textbf{PhiBE-Flow}, a model-free framework for continuous-time
trajectory generation from discrete observations. Using PhiBE theory
\citep{zhu2024phibe}, we approximate local drift and diffusion information
from short-time increments. Combined with the Fokker--Planck equation,
these approximations yield a score-free variational objective for learning
the probability velocity field induced by the SDE governing
the system's stochastic dynamics. With this field, we generate deterministic 
trajectories whose time-dependent marginal distributions approximate those
of the underlying stochastic system.
We evaluate PhiBE-Flow on the Ornstein--Uhlenbeck (OU) process, double-pendulum
dynamics \citep{sutton1998reinforcement}, stochastic Navier--Stokes dynamics \citep{chen2024probabilistic}, and KTH videos \citep{schuldt2004recognizing}.
These experiments demonstrate its ability to recover probability flows, faithfully capture the nonlinear motion in stochastic dynamics, 
and preserve multiscale physical statistics across systems of increasing
complexity. On both the Navier--Stokes and KTH video datasets, PhiBE-Flow
outperforms representative diffusion-based and stochastic-transport
baselines.

Our main contributions are:
\begin{itemize}
    \item We introduce \textbf{PhiBE-Flow}, a model-free framework that
    learns SDE-induced probability velocity fields from discrete observations
    for continuous-time trajectory generation.

    \item We derive a score-free variational objective without assuming known
    SDE coefficients, with convergence guarantees accounting for
    time-discretization and finite-sample errors.

    \item We demonstrate the effectiveness of PhiBE-Flow on stochastic dynamical systems, showing improved generation accuracy, physical consistency, and video prediction metrics over representative baselines on the Navier--Stokes and KTH video datasets.
\end{itemize}

\section{Related Work}

\subsection{Autoregressive Generation}

Autoregressive methods factorize the conditional distribution of a future sequence and generate predictions one step at a time. Recurrent models propagate a hidden state and directly predict the next state or frame~\citep{mikolov2010recurrent,sutskever2014sequence,shi2015convolutional}. Transformer-based models use self-attention to capture longer-range dependencies and often represent dynamical systems as sequences of discrete tokens or latent variables for next-token prediction~\citep{vaswani2017attention,dai2019transformer}. These models support recursive generation over arbitrary horizons and have achieved strong performance across physical and visual systems~\citep{srivastava2015unsupervised,yan2021videogpt,geneva2022transformers}. However, they learn transitions at discrete observation times, and do not explicitly model the continuous-time dynamics underlying the observations.

\subsection{Bridge-Based Generation}

Bridge-based methods replace direct prediction with an auxiliary process that transports a source distribution to the target distribution. Diffusion models use a fixed forward process to gradually corrupt data into noise, while a learned reverse process generates the target, optionally conditioned on past observations \citep{sohl2015deep,song2020score,ho2022video,voleti2022mcvd}. 
Flow Matching
directly learns continuous-time vector fields by regressing against
velocities associated with prescribed conditional probability paths
\citep{lipman2022flow,tong2023improving}.
Stochastic interpolants construct deterministic or stochastic paths between arbitrary endpoint distributions \citep{albergo2022building,albergo2025stochastic}, including Schr\"odinger bridges \citep{debortoli2021diffusion,chen2021likelihood} and F\"ollmer processes \citep{chen2024probabilistic}. Although effective for modeling forecast uncertainty, these auxiliary dynamics need not coincide with the physical-time evolution of the observed system.

\subsection{Learning Stochastic Dynamics}

A more direct approach to dynamical learning is to model temporal evolution through SDEs, whose drift and diffusion coefficients are often unknown in data-driven settings.
Neural \citep{tzen2019neural,liu2020does,kidger2021neural} and latent SDE methods \citep{hasan2021identifying,zeng2023latent,daems2024variational} learn these components using neural networks, while related physics-informed approaches infer them through Fokker--Planck equations \citep{chen2021inverse}.
To improve training efficiency, stochastic adjoint \citep{li2020scalable} and simulation-free methods \citep{course2023amortized,bartosh2025sde} have been developed for latent SDEs.
However, explicitly learning SDE dynamics can still pose scalability challenges in high-dimensional systems.

PhiBE offers an alternative perspective by incorporating local transition information into a continuous-time PDE formulation without first fitting a complete SDE model.
Building on this perspective, we retain the assumption of underlying SDE dynamics but seek a probability flow with the same time-marginal densities, without explicitly fitting drift and diffusion or numerically integrating SDEs during training.
This motivates our model-free, score-free variational formulation for learning continuous-time stochastic systems from discrete observations.

\section{Method}
\subsection{Problem Setting}

Consider a stochastic dynamical system governed by the It\^o SDE
\begin{equation}
    dX_t = b(X_t)\,dt + \sigma(X_t)\,dB_t,
    \qquad
    X_0 \sim \mu_0,
    \label{eq:sde}
\end{equation}
where $X_t\in\mathbb R^d$, while the drift field
$b:\mathbb R^d\to\mathbb R^d$ and the diffusion loading
$\sigma:\mathbb R^d\to\mathbb R^{d\times r}$ are unknown. We write
$\Sigma(x):=\sigma(x)\sigma(x)^\top$ for the diffusion covariance.

We assume that the continuous trajectory is not directly accessible. 

Instead, we observe $N$ independent trajectories at discrete time
$0=t_0<t_1<\cdots<t_{N_t}\leq T$,
\begin{equation}
    \mathcal{D}
    =
    \left\{
        x_{t_i}^{(k)}
    \right\}_{k=1,\ldots,N;\;i=0,\ldots,N_t}.
    \label{eq:dataset}
\end{equation}
Let $\mu_t=\mathrm{Law}(X_t)$ denote the marginal distribution of the
stochastic process at physical time $t$. 
Our goal is to learn the evolution of $\mu_t$ from the discrete
observations without separately identifying the complete drift and
diffusion functions. 
In particular, we seek a deterministic vector field
\begin{equation}
    v_\theta:\mathbb R^d\times[0,T]\rightarrow\mathbb{R}^d,
\end{equation}
with parameter $\theta\in\mathbb{R}^m$. Then we can sample points at intermediate times by evolving the ODE
\begin{equation}
    \frac{dX_t}{dt}
    =
    v_\theta(X_t,t),
    \qquad
    X_0\sim\mu_0.
    \label{eq:learned_flow}
\end{equation}
This deterministic dynamics is known as the \textit{probability flow} associated with the marginal evolution of the observed stochastic system.

\subsection{Probability-Flow Formulation}

Let $\rho(x,t)$ denote the probability density of $X_t$. 
The marginal evolution of the SDE in \eqref{eq:sde} is governed by
the Fokker--Planck equation
\begin{equation}
    \partial_t \rho(x,t)
    =
    -\nabla_x \cdot \bigl(b(x)\rho(x,t)\bigr)
    +
    \frac12\sum_{i,j=1}^d
    \partial_{x_i}\partial_{x_j}
    \bigl(\Sigma_{ij}(x)\rho(x,t)\bigr).
    \label{eq:fokker_planck}
\end{equation}
Using
$\nabla_x\rho(x,t)=\rho(x,t)\nabla_x\log\rho(x,t)$,
\eqref{eq:fokker_planck} can be written as the
continuity equation
\begin{equation}
    \partial_t\rho(x,t)
    +
    \nabla_x\cdot
    \bigl(\rho(x,t)v(x,t)\bigr)
    =
    0,
    \label{eq:continuity}
\end{equation}
where
\begin{equation}
    v(x,t)
    =
    b(x)
    -
    \frac12\Sigma(x)\nabla_x\log\rho(x,t)
    -
    \frac12\nabla_x\cdot\Sigma(x).
    \label{eq:probability_velocity}
\end{equation}
Thus, the marginal evolution of the stochastic system can be represented
by a deterministic probability velocity $v$.
This motivates learning $v_\theta$ by minimizing its population
discrepancy from $v$:
\begin{equation}
    \min_{\theta\in\mathbb{R}^m}
    \int_0^T
    \mathbb{E}_{X_t\sim\rho(\cdot,t)}
    \left[
        \left\|
            v_\theta(X_t,t)-v(X_t,t)
        \right\|^2
    \right]dt.
    \label{eq:population_velocity_objective}
\end{equation}

Directly evaluating ~\eqref{eq:population_velocity_objective}
is infeasible because the density score
$\nabla_x\log\rho(x,t)$ is unknown.
Expanding the squared objective, discarding terms independent of
$\theta$, and applying integration by parts to the remaining
score-dependent term, we obtain
\begin{equation}
    \min_{\theta\in\mathbb{R}^m}
    \int_0^T
    \mathbb{E}_{X_t\sim\rho(\cdot,t)}
    \left[
        \|v_\theta(X_t,t)\|^2
        -
        2v_\theta(X_t,t)\cdot b(X_t)
        -
        \Sigma(X_t):\nabla_xv_\theta(X_t,t)
    \right]dt.
    \label{eq:score_free_objective}
\end{equation}
While the density score is eliminated,
the objective still depends on the unknown drift $b$ and
diffusion covariance $\Sigma$, and
therefore cannot yet be evaluated from the discrete observations alone.

\subsection{PhiBE-Flow Objective}
\label{meth}
The score-free objective in \eqref{eq:score_free_objective} still
requires the unknown drift $b$ and diffusion covariance $\Sigma$.
To make the objective accessible from discrete observations, we leverage
the local approximations provided by the PhiBE. For a sufficiently small sampling interval
$\Delta t$, the drift and diffusion terms are approximated from
short-time transitions by
\begin{equation}
    \widehat{b}_{\Delta t}(x)
    =
    \frac{1}{\Delta t}
    \mathbb{E}
    \left[
        X_{t+\Delta t}-X_t
        \mid X_t=x
    \right],
    \label{eq:phibe_drift}
\end{equation}
and
\begin{equation}
    \widehat{\Sigma}_{\Delta t}(x)
    =
    \frac{1}{\Delta t}
    \mathbb{E}
    \left[
        (X_{t+\Delta t}-X_t)(X_{t+\Delta t}-X_t)^\top
        \mid X_t=x
    \right].
    \label{eq:phibe_diffusion}
\end{equation}

Substituting these approximations into
\eqref{eq:score_free_objective} yields the PhiBE-Flow population
objective
\begin{equation}
\begin{aligned}
    \mathcal{J}_{\Delta t}(\theta)
    =
    \int_0^T
    \mathbb{E}_{X_t\sim\rho(\cdot,t)}
    \Big[
        \|v_\theta(X_t,t)\|^2
        -2v_\theta(X_t,t)\cdot
        \widehat{b}_{\Delta t}(X_t) 
        -\widehat{\Sigma}_{\Delta t}(X_t):\nabla_xv_\theta(X_t,t)
    \Big]dt .
\end{aligned}
\label{eq:phibe_population_objective}
\end{equation}

In practice, the conditional expectations are replaced by the observed
short-time increments. For each transition, define
\begin{equation}
    \Delta x_i^{(k)}
    =
    x_{t_{i+1}}^{(k)}-x_{t_i}^{(k)},
    \qquad
    \Delta t_i=t_{i+1}-t_i .
    \label{eq:increments}
\end{equation}
The resulting empirical PhiBE-Flow objective is
\begin{equation}
\begin{aligned}
    \widehat{\mathcal{J}}(\theta)
    =
    \frac{1}{NN_t}
    \sum_{i=0}^{N_t-1}
    \sum_{k=1}^{N}
    \Bigg[
        \|v_\theta(x_{t_i}^{(k)},t_i)\|^2 
        -2v_\theta(x_{t_i}^{(k)},t_i)
        \cdot
        \frac{\Delta x_i^{(k)}}{\Delta t_i} 
        -
        \frac{1}{\Delta t_i}
        \Delta x_i^{(k)\top}
        \nabla_xv_\theta(x_{t_i}^{(k)},t_i)
        \Delta x_i^{(k)}
    \Bigg].
\end{aligned}
\label{eq:empirical_phibe_objective}
\end{equation}

After solving this optimization problem, given a new initial sample \(X_0\sim\mu_0\), we can generate
future samples by solving
\[
\dot X_t=v_\theta(X_t,t),
\qquad
X_t=X_0+\int_0^t v_\theta(X_s,s)\,ds.
\]

The covariance-weighted Jacobian term can be costly in high dimensions.
Some systems admit constant isotropic diffusion, and this term simplifies to $\sigma^2\nabla_x\cdot v_\theta$.
See Appendix~\ref{app:constant_scalar} for details.

\section{Theoretical Guarantees}
\label{sec:theory}

We establish convergence guarantees for PhiBE-Flow by separating two
sources of error: the population approximation error induced by discrete
observations and the finite-sample error arising from finitely many
transitions.

The finite-sample error is for fixed linear-basis models. Population bounds are evaluated on a compact set
$\Omega\subset\mathbb R^d$; statistical bounds use the observation-time
sampling measure $\mu_L$ in Appendix~\ref{app:finite_sample_setting}.
\subsection{Population Approximation}
\label{sec:population_approximation}

We first state the first-order consistency of the PhiBE coefficients.

\begin{lemma}[PhiBE coefficient approximation]
\label{lem:phibe_consistency}
Suppose the population conditions in Appendix~\ref{app:regularity} hold.
 Let $\widehat b$ and
$\widehat\Sigma$ denote the first-order PhiBE estimates of the drift
$b(x)$ and the diffusion covariance $\Sigma(x)$, respectively.
Then there exist constants $L_b,L_\Sigma,L_{\mathrm{div}}>0$, independent of $\Delta t$,
such that
\begin{equation}
    \|b-\widehat b\|_{\infty}
    \le L_b\Delta t,
\end{equation}
\begin{equation}
    \|\Sigma-\widehat\Sigma\|_{\infty}
    \le L_\Sigma\Delta t+o(\Delta t),
\end{equation}
and
\begin{equation}
    \|\nabla\cdot(\Sigma-\widehat\Sigma)\|_{\infty}
    \le L_{\mathrm{div}}\Delta t+o(\Delta t).
\end{equation}
\end{lemma}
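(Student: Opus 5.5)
The plan is to use the Itô--Taylor (Dynkin) expansion of the conditional moments defining $\widehat b_{\Delta t}$ and $\widehat\Sigma_{\Delta t}$ in \eqref{eq:phibe_drift}--\eqref{eq:phibe_diffusion}. Let $\mathcal L f = b\cdot\nabla f + \tfrac12\Sigma:\nabla^2 f$ be the generator of \eqref{eq:sde}. For a smooth test function $f$ with bounded derivatives (the regularity in Appendix~\ref{app:regularity}), Dynkin's formula gives
\begin{equation*}
\mathbb E[f(X_{t+\Delta t})\mid X_t=x]=f(x)+\Delta t\,\mathcal Lf(x)+\int_0^{\Delta t}\!\!\int_0^s \mathbb E\bigl[\mathcal L^2 f(X_{t+r})\mid X_t=x\bigr]\,dr\,ds .
\end{equation*}
By time-homogeneity the conditional law depends only on $x$ and $\Delta t$. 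The remainder is bounded by $\tfrac12\Delta t^2\sup_\Omega|\mathcal L^2 f|$, provided $\mathcal L^2 f$ is bounded, or at least has bounded conditional expectation along trajectories started in $\Omega$ (obtained from moment bounds plus polynomial growth).

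\textbf{Drift.} Take $f(y)=y_i-x_i$ with $x$ fixed. Then $\mathcal L f=b_i$ and $\mathcal L^2 f=\mathcal L b_i$, so
\begin{equation*}
\widehat b_i(x)-b_i(x)=\frac1{\Delta t}\int_0^{\Delta t}\!\!\int_0^s\mathbb E[\mathcal L b_i(X_{t+r})\mid X_t=x]\,dr\,ds,
\end{equation*}
and $\|b-\widehat b\|_\infty\le \tfrac12\Delta t\,\sup|\mathcal L b|=:L_b\Delta t$. No $o(\Delta t)$ term appears here, which matches the statement.

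\textbf{Diffusion.} Take $f(y)=(y_i-x_i)(y_j-x_j)$. Since $\nabla f$ vanishes at $y=x$ and $\nabla^2 f=e_ie_j^\top+e_je_i^\top$, we get $\mathcal Lf(x)=\Sigma_{ij}(x)$. The first-order term reproduces $\Sigma$, and the remainder is again $\tfrac1{\Delta t}\int\!\int \mathbb E[\mathcal L^2 f]$, which is $O(\Delta t)$. Here $\mathcal L^2 f(x)$ contains $b_ib_j$, derivative terms of $\Sigma$, and the like. Its value at $y=x$ gives the leading constant $L_\Sigma$, so $\widehat\Sigma=\Sigma+\tfrac{\Delta t}{2}\mathcal L^2 f|_{y=x}+o(\Delta t)$, with the $o(\Delta t)$ coming from continuity of $r\mapsto\mathbb E[\mathcal L^2 f(X_{t+r})\mid X_t=x]$ at $r=0$. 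This explains the form $L_\Sigma\Delta t+o(\Delta t)$.

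\textbf{Divergence.} This is the main obstacle, because the test function $f$ above depends on $x$, so differentiating $\widehat\Sigma$ in $x$ requires differentiating both the integrand and the conditional law. I would first try rewriting $\widehat\Sigma_{ij}(x)=\tfrac1{\Delta t}\bigl(u_{ij}(x,\Delta t)-x_ju_i(x,\Delta t)-x_iu_j(x,\Delta t)+x_ix_j\bigr)$, where $u_i=\mathbb E[X_{\Delta t,i}\mid X_0=x]$ and $u_{ij}=\mathbb E[X_{\Delta t,i}X_{\Delta t,j}\mid X_0=x]$. These are solutions of the backward Kolmogorov equation $\partial_s u=\mathcal L u$ with smooth initial data, so $u(x,\Delta t)=u(x,0)+\Delta t\,\mathcal Lu(x,0)+\int_0^{\Delta t}(\Delta t-s)\mathcal L^2u(x,s)\,ds$. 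The remainder is now in terms of $\mathcal L^2 u(\cdot,s)$, and it can be differentiated in $x$ using the standard $C^3$-type bounds on Kolmogorov solutions, uniform for $s\in[0,\Delta t]$, assumed in Appendix~\ref{app:regularity}. Taking $\nabla\cdot$ of the resulting expansion gives $\nabla\cdot\widehat\Sigma=\nabla\cdot\Sigma+\tfrac{\Delta t}{2}\nabla\cdot(\cdot)+o(\Delta t)$, which yields $L_{\mathrm{div}}$. If uniform derivative bounds are not directly available, the fallback is to assume $b,\sigma\in C^4_b$ and invoke the flow-derivative estimates for $\partial_x X_s^x$.

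Finally, all suprema are taken over the compact set $\Omega$, and the constants $L_b,L_\Sigma,L_{\mathrm{div}}$ depend only on these derivative bounds, not on $\Delta t$.
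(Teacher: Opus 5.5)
Your proposal is correct under the extra regularity you flag, and its skeleton matches the paper's proof: the same test functions $y_j$ and $g_{jk,x}(y)=(y_j-x_j)(y_k-x_k)$, the identity $\mathcal L g_{jk,x}(x)=\Sigma_{jk}(x)$, and the same leading correction $R_{jk}(x)=\tfrac12\mathcal L^2 g_{jk,x}(x)$.

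The real difference is how the remainders are controlled. The paper does not derive them. Its population conditions simply \emph{assume} that the second-order semigroup expansions hold uniformly on $\Omega$ with $o(\Delta t^2)$ remainders, including one spatial derivative of the increment second moment. So the divergence bound is read off directly, and the drift bound is obtained only for small $\Delta t$, by absorbing $o(\Delta t)$ into $L_b$.

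You instead prove the expansions from Dynkin's formula with an explicit double-integral remainder. For the divergence you rewrite $\widehat\Sigma$ through the raw moments $u_i,u_{ij}$, which turns the $x$-dependence of the test function into polynomial prefactors. The problem then reduces to differentiating $\mathcal L^2u(\cdot,s)=P_s\mathcal L^2u_0$ in $x$. Done directly on $u$, this needs five spatial derivatives of $u(\cdot,s)$, not ``$C^3$-type'' bounds. Done through $P_s\mathcal L^2u_0$ and flow derivatives, it needs about three derivatives of $b$ and $\Sigma$.

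Each route buys something different. Yours gives a self-contained justification and a drift bound valid for all $\Delta t\le h_0$. The paper's needs only local smoothness plus the assumed expansions on $\Omega$. Yours requires global inputs: a true martingale in Dynkin's formula, moment bounds uniform over starting points in $\Omega$, growth-controlled derivatives, and flow-derivative estimates. The paper deliberately declines these, stating that local smoothness alone is not used to claim uniform remainder bounds.

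Two points need tightening.
\begin{itemize}
\item The bound by $\tfrac12\Delta t^2\sup_\Omega|\mathcal L^2 f|$ is wrong as written, because $X_{t+r}$ leaves $\Omega$. Your constants must be global suprema, or bounds on $\sup_{x\in\Omega,\,r\le h_0}|\mathbb E[\mathcal L^2 f(X_{t+r})\mid X_t=x]|$. Your closing claim that all suprema are over $\Omega$ holds only if you fall back on the paper's absorb-for-small-$\Delta t$ argument.
\item The $o(\Delta t)$ terms must be uniform in $x\in\Omega$ to give sup-norm bounds. Continuity at $r=0$ for each fixed $x$ is not enough; you need joint continuity of $(x,r)\mapsto\mathbb E[\mathcal L^2 g_{ij,x}(X_{t+r})\mid X_t=x]$ on the compact set $\Omega\times[0,h_0]$, and the same for the differentiated remainder.
\end{itemize}
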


The corresponding error in the induced probability velocity is
characterized as follows.

\begin{theorem}[General population approximation]
\label{thm:general_population}
Suppose the population conditions in
Appendix~\ref{app:regularity} hold, and let
$\widehat b$ and $\widehat\Sigma$ be the first-order PhiBE
coefficients in Lemma~\ref{lem:phibe_consistency}. 
Then there exist constants $L_v,K>0$, independent of $\Delta t$, such that
\begin{equation}
    \|v(x,t)-\widehat v(x,t)\|
    \le L_v\Delta t+o(\Delta t),
\end{equation}
and
\begin{equation}
    \|x_t-\widehat x_t\|
    \le
    \frac{e^{Kt}-1}{K}L_v\Delta t+o(\Delta t).
\end{equation}
\end{theorem}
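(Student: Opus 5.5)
The argument has two steps. First we bound the velocity error directly, using the formula \eqref{eq:probability_velocity} together with Lemma~\ref{lem:phibe_consistency}. Then we convert it into a trajectory bound by a Gr\"onwall argument for the two probability-flow ODEs. We read $\widehat v$ as the velocity obtained by substituting the PhiBE coefficients into \eqref{eq:probability_velocity}:
\[
\widehat v(x,t)=\widehat b(x)-\tfrac12\widehat\Sigma(x)\nabla_x\log\widehat\rho(x,t)-\tfrac12\nabla_x\cdot\widehat\Sigma(x).
\]
Here $\widehat\rho$ solves the Fokker--Planck equation \eqref{eq:fokker_planck} with $(\widehat b,\widehat\Sigma)$ and the same initial law $\mu_0$. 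This $\widehat v$ is also the minimizer of $\mathcal J_{\Delta t}$ whenever that minimizer is attained over all vector fields, since the derivation from \eqref{eq:population_velocity_objective} to \eqref{eq:score_free_objective} is reversible.

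Subtracting the two formulas gives
\[
v-\widehat v=(b-\widehat b)-\tfrac12(\Sigma-\widehat\Sigma)\nabla\log\rho-\tfrac12\widehat\Sigma\bigl(\nabla\log\rho-\nabla\log\widehat\rho\bigr)-\tfrac12\nabla\cdot(\Sigma-\widehat\Sigma).
\]
The first, second and fourth terms are $O(\Delta t)+o(\Delta t)$ on $\Omega\times[0,T]$ directly from Lemma~\ref{lem:phibe_consistency}, provided $\nabla\log\rho$ is uniformly bounded there. That bound is part of the regularity conditions in Appendix~\ref{app:regularity}: smooth, positive density on the compact set. Boundedness of $\widehat\Sigma$ follows from $\|\Sigma-\widehat\Sigma\|_\infty\to0$.

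The main obstacle is the score difference $\nabla\log\rho-\nabla\log\widehat\rho$. The plan is a PDE stability estimate for the Fokker--Planck equation under coefficient perturbation. The error $e=\rho-\widehat\rho$ solves the Fokker--Planck equation with coefficients $(\widehat b,\widehat\Sigma)$, zero initial data, and source term
\[
-\nabla\cdot\bigl((b-\widehat b)\rho\bigr)+\tfrac12\partial_{ij}\bigl((\Sigma-\widehat\Sigma)_{ij}\rho\bigr).
\]
Parabolic regularity estimates (Schauder, or $C^{1}$ bounds under uniform ellipticity and smooth coefficients) bound $e$ and $\nabla e$ in $C^{1}$ by a constant times the $C^1$ norms of the coefficient errors. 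Those norms are $O(\Delta t)$ by Lemma~\ref{lem:phibe_consistency}, including the divergence bound. We may need to assume the analogous first-derivative bound for $b-\widehat b$ as part of the regularity hypotheses. A lower bound $\rho\ge c>0$ on $\Omega$ then yields
\[
\|\nabla\log\rho-\nabla\log\widehat\rho\|\le C\Delta t+o(\Delta t).
\]
If this route is too heavy, the first thing I would try instead is to postulate Lipschitz dependence of the score on the coefficients as an explicit regularity assumption in Appendix~\ref{app:regularity}. Collecting the constants gives $L_v$.

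For the trajectories, let $x_t$ and $\widehat x_t$ solve $\dot x=v(x,t)$ and $\dot{\widehat x}=\widehat v(\widehat x,t)$ from the same initial point, and let $K$ be a Lipschitz constant of $v$ in $x$ (again from the regularity assumptions). Then
\[
\tfrac{d}{dt}\|x_t-\widehat x_t\|\le\|v(x_t,t)-v(\widehat x_t,t)\|+\|v(\widehat x_t,t)-\widehat v(\widehat x_t,t)\|\le K\|x_t-\widehat x_t\|+L_v\Delta t+o(\Delta t).
\]
Gr\"onwall's inequality with zero initial error gives
\[
\|x_t-\widehat x_t\|\le\frac{e^{Kt}-1}{K}\bigl(L_v\Delta t+o(\Delta t)\bigr),
\]
which is the claimed bound, since the $o(\Delta t)$ term stays $o(\Delta t)$ uniformly for $t\le T$. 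This step needs the trajectories to remain in $\Omega$; on a finite horizon we assume this, or enlarge $\Omega$.
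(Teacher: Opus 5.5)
\paragraph{Gap: the definition of $\widehat v$.} The problem is how you define $\widehat v$, and the resulting ``main obstacle'' cannot be closed under the stated hypotheses.

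You build $\widehat v$ from the score of $\widehat\rho$, the Fokker--Planck solution driven by $(\widehat b,\widehat\Sigma)$. You justify this by saying it minimizes $\mathcal J_{\Delta t}$, but that is false. The expectation in \eqref{eq:phibe_population_objective} is taken under the true marginal $\rho_t$, which is the law the data are sampled from. The integration by parts therefore produces $\nabla\log\rho_t$, and the minimizer is
\[
\widehat v=\widehat b-\tfrac12\widehat\Sigma\nabla\log\rho_t-\tfrac12\nabla\cdot\widehat\Sigma .
\]
This is exactly the field the paper defines in \eqref{eq:population_phibe_field}, characterized by the identity \eqref{eq:population_objective_identity}. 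Your reversibility argument would yield your $\widehat v$ only if the samples came from $\widehat\rho$.

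\paragraph{Why the extra step fails.} Under your definition, the score-difference term is not controlled by anything available.
\begin{itemize}
\item Lemma~\ref{lem:phibe_consistency} bounds $b-\widehat b$, $\Sigma-\widehat\Sigma$ and $\nabla\cdot(\Sigma-\widehat\Sigma)$ only in sup norm on the compact set $\Omega$.
\item A Fokker--Planck stability estimate needs the coefficient errors on all of $\mathbb R^d$, because mass leaving $\Omega$ feeds back into $\rho$ on $\Omega$. Globally these errors need not be $O(\Delta t)$: in the OU case, $\widehat\Sigma-\Sigma\approx\Delta t\,H(x)$ grows like $\Delta t\,\|x\|^2$.
\item A pointwise bound on $\nabla(\rho-\widehat\rho)$ from a divergence-form source needs H\"older, not $L^\infty$, control of the flux.
\item Schauder-type estimates also need uniform ellipticity. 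The paper never assumes it, and it fails in the paper's own double-pendulum example, where noise enters only the angular accelerations.
\end{itemize}
You essentially concede this when you propose adding assumptions. As written, then, your argument proves a different statement, and only under extra hypotheses.

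\paragraph{The fix.} Use the paper's definition. The term $\tfrac12\widehat\Sigma\bigl(\nabla\log\rho-\nabla\log\widehat\rho\bigr)$ is then identically zero, and the rest of your argument is precisely the paper's proof:
\begin{itemize}
\item the three-term decomposition, bounded via the lemma and $S_1=\sup_{\mathcal Z}\|\nabla\log\rho_t\|<\infty$ (from positivity and continuity of $\rho$ on the compact $\mathcal Z$), giving $L_v=L_b+\tfrac12L_\Sigma S_1+\tfrac12L_{\mathrm{div}}$;
\item Gr\"onwall with a Lipschitz constant $K$ of $v$ on convex $\Omega$, as long as both flows stay in $\Omega$.
\end{itemize}
Drop the PDE-stability step; nothing else needs to change.
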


The proof is provided in Appendices~\ref{app:phibe_consistency} and~\ref{app:proof_general_population}.
For the OU process, the approximate probability
velocity and the associated error bounds admit explicit expressions;
see Appendix~\ref{app:proof_ou_population}.

\subsection{Finite-Sample Convergence}
\label{sec:finite_sample}

We give a linear-basis proof for
$\mathcal V_p=\{v_\theta=\sum_{\ell=1}^p\theta_\ell\psi_\ell:
\theta\in\mathbb R^p\}$, with basis fields fixed independently of
the fitting sample. The basis fields may be nonlinear in $(x,t)$;
only their coefficients are fitted. Here $N$ counts independent
trajectories, while transitions within each trajectory may be
dependent. The sampling measure $\mu_L$ and estimator $v_N$ are
defined in Appendix~\ref{app:finite_sample_setting}.

\begin{theorem}[Finite-sample convergence for linear-basis models]
\label{thm:general_finite_sample}
Under Assumption~\ref{ass:finite_sample_class}, including
$\widehat v\in\mathcal V_p$, there exist constants
$c_0,C_{\mathcal V}>0$, independent of $N,\Delta t,\delta$, such that,
for $\Delta t\in(0,1]$ and $\delta\in(0,1)$, if
\[
N\ge c_0p\log(2p/\delta),
\]
then, with probability at least $1-\delta$,
\begin{equation}
\|v_N-\widehat v\|_{L^2(\mu_L)}
\le C_{\mathcal V}
\sqrt{\frac{p\log(2p/\delta)}{N\Delta t}}.
\end{equation}
\end{theorem}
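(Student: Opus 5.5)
Since $v_\theta=\sum_\ell\theta_\ell\psi_\ell$ is linear in $\theta$, the empirical objective $\widehat{\mathcal J}$ in \eqref{eq:empirical_phibe_objective} is a quadratic $\theta^\top \widehat G\theta-2\theta^\top\widehat h$. Here
\[
\widehat G_{\ell m}=\frac{1}{NN_t}\sum_{i,k}\psi_\ell(x_{t_i}^{(k)},t_i)\cdot\psi_m(x_{t_i}^{(k)},t_i)
\]
is an empirical Gram matrix, and $\widehat h_\ell$ collects the drift-increment and Jacobian-quadratic-form terms. The population objective $\mathcal J_{\Delta t}$ restricted to $\mathcal V_p$, written with respect to $\mu_L$, has the same form $\theta^\top G\theta-2\theta^\top h$, with $G$ the $L^2(\mu_L)$ Gram matrix. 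Because $\widehat v\in\mathcal V_p$, its minimizer satisfies $G\widehat\theta=h$. The estimator satisfies $\widehat G\theta_N=\widehat h$. We therefore get the identity
\[
\widehat G(\theta_N-\widehat\theta)=(\widehat h-h)-(\widehat G-G)\widehat\theta=:\xi,
\]
and $\|v_N-\widehat v\|_{L^2(\mu_L)}^2=(\theta_N-\widehat\theta)^\top G(\theta_N-\widehat\theta)$. I will work in the basis orthonormalized under $\mu_L$, so that $G=I$. This is legitimate by Assumption~\ref{ass:finite_sample_class}, which I expect provides $\lambda_{\min}(G)>0$ and uniform bounds on $\psi_\ell$, $\nabla\psi_\ell$, and on the moments of the increments.

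\textbf{Step 1 (Gram concentration).} Per trajectory, the summands $\frac1{N_t}\sum_i\psi(x^{(k)}_{t_i},t_i)\psi(x^{(k)}_{t_i},t_i)^\top$ are i.i.d. across $k$, even though they are dependent within a trajectory. Each is bounded in operator norm by $\sup_x\sum_\ell|\psi_\ell|^2\lesssim p$. Matrix Bernstein or Chernoff then gives $\|\widehat G-G\|\le 1/2$ with probability $1-\delta/2$ once $N\ge c_0p\log(2p/\delta)$. This is exactly the sample-size condition in Theorem~\ref{thm:general_finite_sample}, and it yields $\lambda_{\min}(\widehat G)\ge1/2$.

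\textbf{Step 2 (noise vector).} Next I bound $\|\xi\|\le 2\cdot$ the desired rate. Note that $\mathbb E\xi=0$, because $\widehat b_{\Delta t},\widehat\Sigma_{\Delta t}$ are the conditional expectations of the increment functionals. So $\xi$ is an average of $N$ i.i.d. mean-zero vectors $\xi^{(k)}$ in $\mathbb R^p$. Their size comes from $\Delta x/\Delta t$ and $\Delta x^\top\nabla\psi_\ell\Delta x/\Delta t$. Since $\Delta x=O(\Delta t)+\sigma\Delta B$, the conditional variance of $\Delta x/\Delta t$ is about $\Sigma/\Delta t$, and the variance of the quadratic term is $O(1)$. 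This gives $\mathbb E\|\xi^{(k)}\|^2\lesssim p/\Delta t$ for $\Delta t\le1$. Because the increments are Gaussian-like rather than bounded, I would use a vector Bernstein inequality with sub-exponential/sub-Gaussian tails; alternatively I would truncate at level $\sqrt{\log(p/\delta)/\Delta t}$. Either route gives $\|\xi\|\lesssim\sqrt{p\log(2p/\delta)/(N\Delta t)}$ with probability $1-\delta/2$. The $(\widehat G-G)\widehat\theta$ part is of lower order, since $\|\widehat\theta\|$ is bounded uniformly in $\Delta t$ by Theorem~\ref{thm:general_population}.

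\textbf{Step 3 (combine).} On the intersection of the two events, $\|\theta_N-\widehat\theta\|\le\|\widehat G^{-1}\|\|\xi\|\le2\|\xi\|$. This gives the claim, with $C_{\mathcal V}$ absorbing the basis bounds and the Gram conditioning.

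\textbf{Main obstacle.} The hardest part is Step 2. The difficulty is getting tail control of the $1/\Delta t$-scaled increments that holds uniformly in $\Delta t$ and produces only a $\log(2p/\delta)$ factor rather than $1/\delta$. The dependence across $i$ within one trajectory is harmless, because I concentrate only over the independent index $k$. However, the variance bound must use the martingale structure of $\Delta B$ to obtain $1/\Delta t$ and not $1/\Delta t^2$. My first attempt will be conditional sub-Gaussianity of $\Delta x-\Delta t\,\widehat b_{\Delta t}$ at scale $\sqrt{\Delta t}$, assumed via bounded $\sigma$ in the regularity conditions, followed by matrix Bernstein for sub-exponential summands.
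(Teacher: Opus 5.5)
Your architecture matches the paper's proof: quadratic objectives in $\theta$, whitening by the population Gram matrix $H$, the identity $G_NH^{1/2}(\theta_N-\theta_\star)=\overline\eta_N$ with $\overline\eta_N=H^{-1/2}(c_N-H_N\theta_\star)$ (this is your $\xi$), a lower bound $\lambda_{\min}(G_N)\ge 1/2$, concentration of $\overline\eta_N$ at scale $1/\sqrt{\Delta t}$ over the $N$ independent trajectories only, and the final factor $2$. The step that does not go through is Step 1. The theorem is stated under Assumption~\ref{ass:finite_sample_class}, and that assumption is not a boundedness condition. It only requires $\sup_{\|a\|_2=1}\|a^\top(G_\Xi-I_p)a\|_{\psi_1}\le K_G$ and $\max_\ell\|(\eta_\Xi)_\ell\|_{\psi_1}\le K_R/\sqrt h$. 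This deliberately allows unbounded basis fields, e.g.\ polynomial fields under Gaussian OU marginals, where $\Psi(x,t)^\top\Psi(x,t)$ is unbounded in $x$. Matrix Chernoff with an almost-sure bound $\|G_{\Xi_k}\|\lesssim p$ is therefore not available. Even for bounded $\psi_\ell$, the bound $\sup_x\sum_\ell\|\psi_\ell\|^2\lesssim p$ after orthonormalization is an extra Christoffel-type condition that you have not justified. The paper instead proceeds as follows: it takes a $1/4$-net $\mathcal A$ of the unit sphere with $|\mathcal A|\le 9^p$, applies scalar Bernstein to each sub-exponential average $a^\top(G_N-I_p)a$, and takes a union bound. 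It then uses $\|B\|_{\mathrm{op}}\le 2\max_{a\in\mathcal A}|a^\top Ba|$ for symmetric $B$. This gives $\|G_N-I_p\|_{\mathrm{op}}\le 1/2$ once $N\gtrsim(1+K_G+K_G^2)(p+\log(4/\delta))$, which $N\ge c_0p\log(2p/\delta)$ implies.

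Your treatment of $(\widehat G-G)\widehat\theta$ in Step 2 also needs repair, for two reasons. First, Theorem~\ref{thm:general_population} is a sup-norm bound on $v-\widehat v$ over a compact $\Omega$. It gives no control of $\|\widehat\theta\|=\|\widehat v\|_{L^2(\mu_L)}$, which integrates over all of $\mathbb R^d$. Second, the bound $\|\widehat G-G\|\le 1/2$ that you state in Step 1 only gives $\|(\widehat G-G)\widehat\theta\|\le\|\widehat\theta\|/2$, which does not decay in $N$; you would need the full concentration rate. The paper avoids splitting the residual at all. The vector $\eta_\Xi=H^{-1/2}(c_\Xi-H_\Xi\theta_\star)$ is mean zero, and its coordinates are assumed to have $\psi_1$-norm at most $K_R/\sqrt h$. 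Coordinatewise Bernstein with a union bound over the $p$ coordinates then gives $\|\overline\eta_N\|_2\le CK_R\sqrt{p/h}\,\bigl[\sqrt{\ell_\delta/N}+\ell_\delta/N\bigr]$ with $\ell_\delta=\log(4p/\delta)$. Your heuristic is that the drift increments fluctuate at scale $1/\sqrt{\Delta t}$ while the quadratic Jacobian term fluctuates at scale $O(1)$. That heuristic is the motivation for this assumption, but in the paper the bound is assumed rather than derived. The paper does not derive it from bounded $\sigma$ or bounded $\nabla\psi_\ell$, and neither of these is among the hypotheses.
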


The proof is in Appendix~\ref{app:finite_sample_setting}.
Appendix~\ref{app:proof_ou_finite_sample} gives a separate affine OU
bound relative to the population minimizer within the affine class.

\paragraph{Remark.}
Many neural networks reduce to a linear-basis model by removing
the output activation, freezing the hidden features, and training
only the final affine layer. The theorem applies under its assumptions
when the feature map is fixed independently of the fitting trajectories.
If $\widehat v\notin\mathcal V_p$, an approximation term is also
required; see Appendix~\ref{app:finite_sample_setting}.

\subsection{Overall Convergence}
\label{sec:overall_convergence}

For a compact evaluation set $\Omega$, write
\[
\|f\|_{L^2(\mu_L;\Omega)}^2
:=\frac1L\sum_{j=0}^{L-1}
\int_\Omega\|f(x,t_j)\|^2\rho_{t_j}(x)\,dx.
\]
The statistical and population errors satisfy
\begin{equation}
\label{eq:overall_error_decomposition}
\|v_N-v\|_{L^2(\mu_L;\Omega)}
\le
\underbrace{\|v_N-\widehat v\|_{L^2(\mu_L)}}
_{\text{finite-sample error}}
+\underbrace{\|\widehat v-v\|_{L^2(\mu_L;\Omega)}}
_{\text{PhiBE approximation error}}.
\end{equation}

\begin{corollary}[Overall convergence of PhiBE-Flow]
\label{cor:overall_convergence}
Under the assumptions of Theorems~\ref{thm:general_population}
and~\ref{thm:general_finite_sample}, including
$N\ge c_0p\log(2p/\delta)$, with probability at least $1-\delta$,
\[
\|v_N-v\|_{L^2(\mu_L;\Omega)}
\le L_v\Delta t+C_{\mathcal V}
\sqrt{\frac{p\log(2p/\delta)}{N\Delta t}}+o(\Delta t).
\]
\end{corollary}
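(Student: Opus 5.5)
The corollary follows by combining the triangle-inequality decomposition \eqref{eq:overall_error_decomposition} with the two preceding theorems, one for each term. First I will justify \eqref{eq:overall_error_decomposition}.

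The restricted seminorm $\|\cdot\|_{L^2(\mu_L;\Omega)}$ is the $L^2$ norm of the restriction to $\Omega\times\{t_0,\dots,t_{L-1}\}$, weighted by $\rho_{t_j}$. It therefore satisfies the triangle inequality. It is also dominated by the unrestricted norm: $\|f\|_{L^2(\mu_L;\Omega)}\le\|f\|_{L^2(\mu_L)}$, because the integrand is nonnegative and enlarging the domain from $\Omega$ to $\mathbb R^d$ can only increase the integral. This assumes $\mu_L$ in Appendix~\ref{app:finite_sample_setting} is exactly the time-averaged marginal measure $\frac1L\sum_j\rho_{t_j}\,dx$; I will check this against the appendix definition. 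Writing $v_N-v=(v_N-\widehat v)+(\widehat v-v)$ and applying the two facts gives \eqref{eq:overall_error_decomposition}.

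For the first term I invoke Theorem~\ref{thm:general_finite_sample}. Under its hypotheses, including $N\ge c_0p\log(2p/\delta)$, there is an event of probability at least $1-\delta$ on which the finite-sample term is at most $C_{\mathcal V}\sqrt{p\log(2p/\delta)/(N\Delta t)}$.

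For the second term I use the pointwise bound of Theorem~\ref{thm:general_population}, namely $\|v(x,t)-\widehat v(x,t)\|\le L_v\Delta t+o(\Delta t)$, which holds uniformly for $x\in\Omega$ and $t\in[0,T]$. Squaring and integrating against $\rho_{t_j}$ over $\Omega$ gives
\[
\int_\Omega\|\widehat v-v\|^2\rho_{t_j}\,dx\le (L_v\Delta t+o(\Delta t))^2\int_\Omega\rho_{t_j}\,dx\le (L_v\Delta t+o(\Delta t))^2,
\]
since $\rho_{t_j}$ is a probability density. Averaging over $j$ and taking square roots gives $\|\widehat v-v\|_{L^2(\mu_L;\Omega)}\le L_v\Delta t+o(\Delta t)$. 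This term is deterministic, so it holds on the same event. Adding the two bounds yields the claim.

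The step needing the most care is uniformity of the $o(\Delta t)$ remainder. Theorem~\ref{thm:general_population} must give a remainder uniform over $(x,t)\in\Omega\times\{t_j\}$, so that it survives the sup-to-$L^2$ passage; otherwise integrating a pointwise $o(\Delta t)$ is not justified. I expect uniformity to follow from compactness of $\Omega$ and the regularity in Appendix~\ref{app:regularity}, since Lemma~\ref{lem:phibe_consistency} is stated in sup norm. A further point to check is that both theorems use the same $\widehat v$ and the same grid $\{t_j\}$ with spacing $\Delta t$.
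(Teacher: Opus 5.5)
Your proposal is correct and follows the paper's proof essentially step for step. The paper uses the same triangle inequality, bounds the restricted norm by the full $L^2(\mu_L)$ norm for the statistical term (controlled by Theorem~\ref{thm:general_finite_sample}), and uses $L^{-1}\sum_{j}\int_\Omega\rho_{t_j}\,dx\le1$ to pass from the uniform estimate of Theorem~\ref{thm:general_population} to the $L^2(\mu_L;\Omega)$ term. Your concern about uniformity of the $o(\Delta t)$ remainder is settled by the proof of Theorem~\ref{thm:general_population}, which establishes the bound in $\|\cdot\|_{\infty,\mathcal Z}$ with $\mathcal Z=\Omega\times[0,T]$.
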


The proof is in Appendix~\ref{app:proof_overall_convergence}.
This bound concerns velocity error at the observation times;
it does not by itself give a trajectory bound for the learned field.

\section{Experiments}

We evaluate PhiBE-Flow on four problems of increasing complexity: the Ornstein--Uhlenbeck process, the double pendulum system, stochastic Navier--Stokes dynamics, and KTH video generation. For the Navier--Stokes and KTH benchmarks, we compare against MCVD~\citep{voleti2022mcvd} and PFI~\citep{chen2024probabilistic}. Additional details are provided in Appendix~\ref{app:B}.

\subsection{Learning from Stochastic Dynamical Systems}

\paragraph{Ornstein--Uhlenbeck Process}
\label{sec:ou}

Consider a two-dimensional OU process with
constant diffusion,
\begin{equation}
    dX_t=-\Gamma(X_t-\mu)\,dt+\sigma\,dB_t,
    \qquad
    X_0\sim\mathcal{N}(\mu_0,\Sigma_0),
    \label{eq:ou}
\end{equation}
where $\Gamma\in\mathbb{R}^{2\times 2}, \; \sigma\in\mathbb{R}^{2\times 2}$, $\mu\in\mathbb{R}^2.$ The OU process is particularly useful for validating PhiBE-Flow
because its marginal distributions remain Gaussian and its
probability velocity admits a closed-form affine expression.
We generate trajectories from \eqref{eq:ou} and use pairs of
consecutive observations at discrete time points to learn the
spatiotemporal probability velocity field $v_\theta(x,t)$.

\begin{figure}[!h]
    \centering
    \begin{subfigure}[t]{0.186\linewidth}
        \centering
        \includegraphics[
            width=\linewidth,
            trim={0 0 0 0},
            clip
        ]{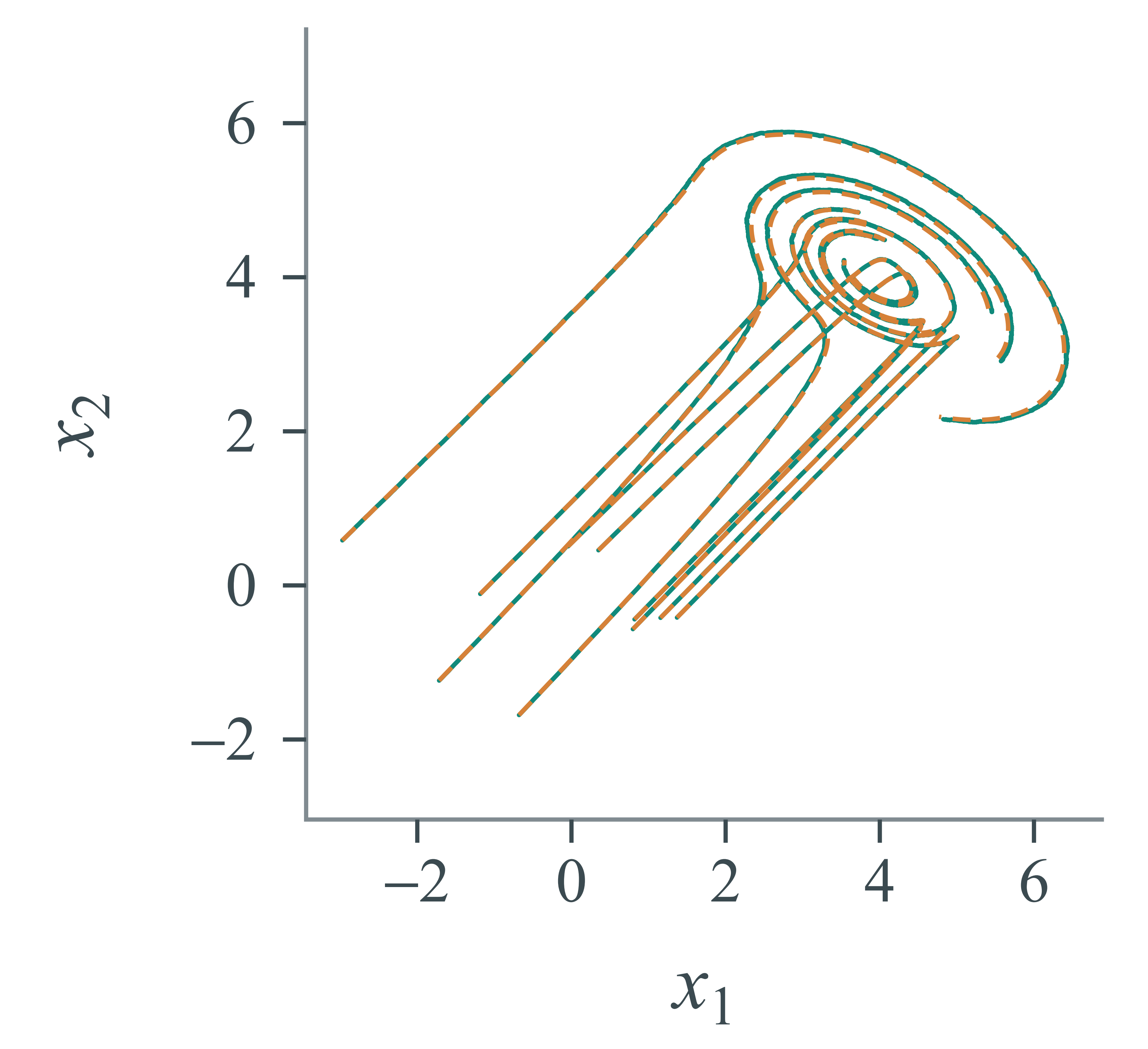}
        \caption{Probability-flow (Non-reversible).}
        \label{fig:ou_flow}
    \end{subfigure}\hfill
    \begin{subfigure}[t]{0.186\linewidth}
        \centering
        \includegraphics[
            width=\linewidth,
            trim={0 0 0 0},
            clip
        ]{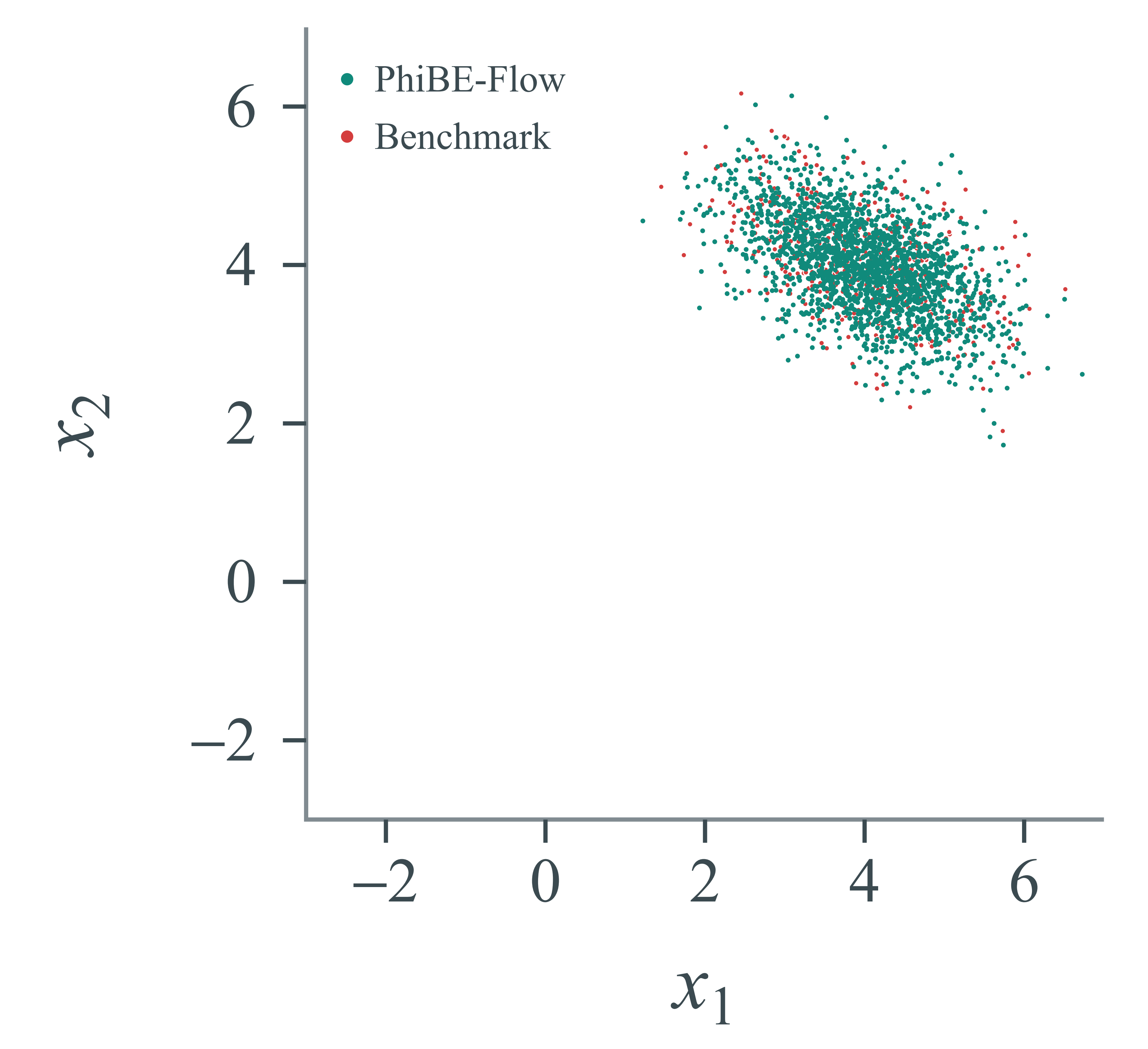}
        \caption{Samples $t=15$. (Non-reversible)}
        \label{fig:ou_flow}
    \end{subfigure}\hfill
    \begin{subfigure}[t]{0.186\linewidth}
        \centering
        \includegraphics[
            width=\linewidth
        ]{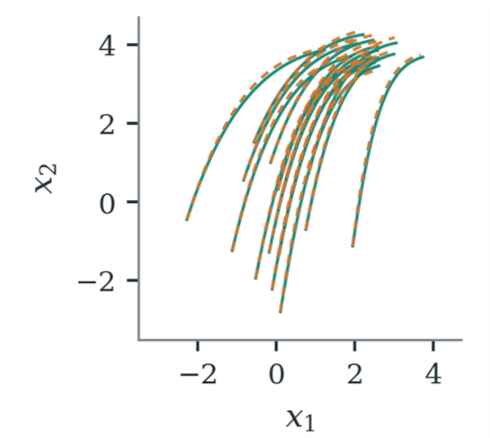}
        \caption{Probability-flow (Reversible).}
        \label{fig:ou_flow}
    \end{subfigure}
    \hfill
    \begin{subfigure}[t]{0.186\linewidth}
        \centering
        \includegraphics[
            width=\linewidth
        ]{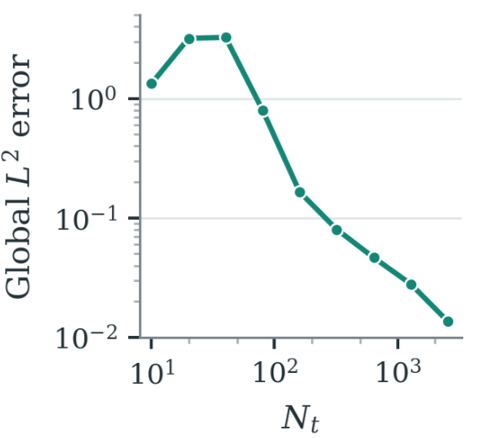}
        \caption{Error versus $N_t$. (Reversible)}
        \label{fig:ou_nt}
    \end{subfigure}
    \hfill
    \begin{subfigure}[t]{0.186\linewidth}
        \centering
        \includegraphics[
            width=\linewidth
        ]{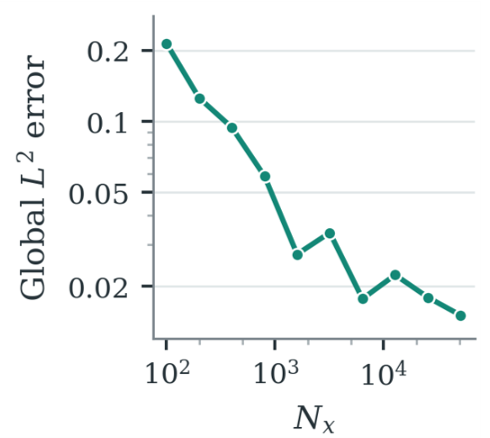}
        \caption{Error versus $N$. (Reversible)}
        \label{fig:ou_nx}
    \end{subfigure}
    \hfill
    \caption{
        \textbf{PhiBE-Flow on reversible and non-reversible OU processes.}
        (a) PhiBE-Flow (green) and analytical (orange) probability-flow trajectories for the non-reversible case.
        (b) PhiBE-Flow (blue) and OU-process (red) samples at $t=15$.
        (c) PhiBE-Flow (green) and analytical (orange) trajectories for the reversible case.
        (d) Global $L^2$ velocity error versus temporal observations $N_t$.
        (e) Global $L^2$ velocity error versus observed trajectories $N$.}
    \label{fig:ou}
    \vspace{-0.6em}
\end{figure}

We first evaluate PhiBE-Flow on a non-reversible OU process whose drift and diffusion matrices, $\Gamma$ and $\sigma$, do not commute. At each observation time $t_i$, we use the affine parameterization
$v_\theta(x,t_i)=A_i x+b_i$, which exactly represents the analytical probability velocity. Despite convergence of the marginal distribution to a stationary Gaussian (Figure~\ref{fig:ou}(b)), the process retains a nonzero steady-state probability current due to broken detailed balance
\citep{godreche2019characterising,sekizawa2024decomposing}.
PhiBE-Flow accurately reproduces these circulating probability-flow trajectories (Figure~\ref{fig:ou}(a)).

We next consider a reversible OU process using a multilayer perceptron
(MLP) parameterization. PhiBE-Flow closely recovers the analytical probability velocity and trajectories (Figure~\ref{fig:ou}(c)). Figures~\ref{fig:ou}(d) and~\ref{fig:ou}(e) further show global $L^2$ velocity errors decaying approximately as $O(N_t^{-1})$ and $O(N^{-1/2})$, respectively. While the finite-sample bound does not directly cover the jointly trained MLP, these trends are consistent with the population time-discretization and affine finite-sample scalings in Corollaries~\ref{cor:ou_population} and~\ref{cor:ou_finite_sample}.

We refer readers to Appendix~\ref{app:ou} for further details on data generation and the implementation of the algorithm.

\paragraph{Double Pendulum System}
\label{sec:double_pendulum}

We next consider the nonlinear Acrobot system, consisting of two
serially connected rigid links.
Following the standard Acrobot dynamics, the state is represented as

$$
    X_t =
    (\theta_1,\theta_2,\dot{\theta}_1,\dot{\theta}_2)_t,
$$

where $\theta_1$ and $\theta_2$ denote the two joint angles. We generate
the dataset by numerically simulating the corresponding equations of
motion and introduce additional stochastic perturbations to the angular
accelerations,
\begin{equation}
\begin{aligned}
    d\theta_1 &= \dot{\theta}_1\,dt, \qquad
    d\theta_2 = \dot{\theta}_2\,dt, \\
    d\dot{\theta}_1
    &= f_1(\theta_1,\theta_2,\dot{\theta}_1,\dot{\theta}_2)\,dt
    + \sigma_1\,dB_t^{(1)}, \\
    d\dot{\theta}_2
    &= f_2(\theta_1,\theta_2,\dot{\theta}_1,\dot{\theta}_2)\,dt
    + \sigma_2\,dB_t^{(2)},
\end{aligned}
\label{eq:acrobot}
\end{equation}
where $f_1$ and $f_2$ are determined by the Acrobot equations of motion
and $\sigma_1,\sigma_2$ control the strength of the artificial noise.
For the state-space experiment, we simulate 1000 trajectories, each
lasting 8 seconds at 30 frames per second, resulting in 240 observations
per trajectory. Initial joint angles are sampled uniformly from
$[-\pi,\pi]$, while the initial angular velocities are sampled from
$[-0.1,0.1]$. In both experiments, we use an 80/20 training--test split. Additional details of the physical parameters, numerical
simulation, and training procedure are provided in Appendix~\ref{app:db}.

\begin{figure}[!ht]
\centering
\includegraphics[width=\linewidth]
{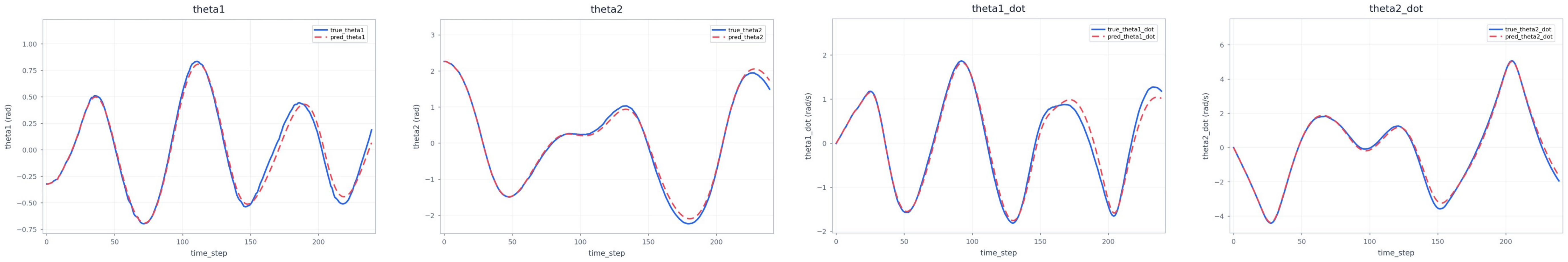}
\caption{
\textbf{State-space learning on the stochastic double-pendulum
system.}
Ground-truth and PhiBE-Flow predictions for two joint angles
$\theta_1,\theta_2$ and angular velocities
$\dot{\theta}_1,\dot{\theta}_2$ over physical time.
}
\label{fig:pendulum_state}
\end{figure}

We first apply PhiBE-Flow directly in the state space
and use a residual MLP to learn the probability velocity from consecutive
observations. Figure~\ref{fig:pendulum_state} compares the
predicted trajectories with the ground truth. PhiBE-Flow closely follows
the nonlinear evolution of both joint angles and their corresponding
angular velocities over the generation horizon.

\begin{figure}[!ht]
\centering
\includegraphics[width=\linewidth]
{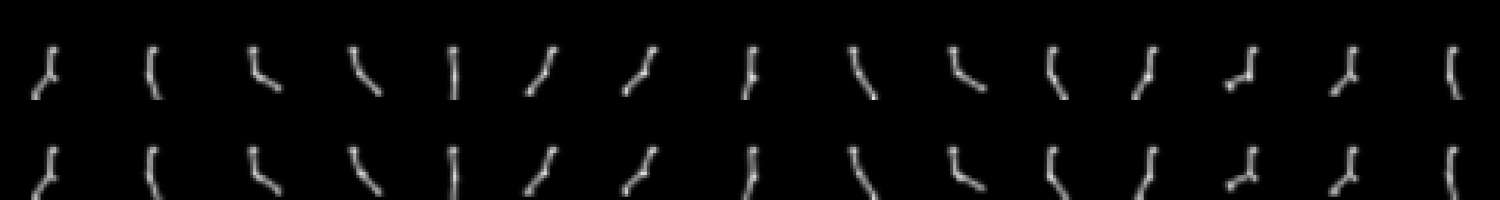}
\caption{
\textbf{Image-based Learning on the stochastic double-pendulum
system.}
The top row shows the ground-truth image sequence. The bottom row
shows results from PhiBE-Flow.
}
\label{fig:pendulum_image}
\end{figure}

We further construct an image-based learning task by rendering each
pendulum state into a grayscale $28\times28$ image. A geometry-preserving
encoder maps each frame to a three-dimensional latent representation;
the latent positions are then augmented with finite-difference
velocities to form a six-dimensional dynamical state. PhiBE-Flow learns
the probability velocity in this latent space, and the predicted latent
states are decoded back to image space. 

As shown in
Figure~\ref{fig:pendulum_image}, the predicted rollout closely follows
the ground-truth motion while preserving the geometry of the two-link
system. 
This experiment demonstrates that PhiBE-Flow can model the same
stochastic physical dynamics both from explicit states and from learned
visual representations.

\subsection{2D Navier--Stokes Equations Generation}
\label{sec:nse}

We next consider learning the stochastic two-dimensional Navier--Stokes equations on the torus in the vorticity formulation,
\begin{equation}
d\omega + \mathbf{v}\cdot\nabla\omega dt
= \nu\Delta\omega dt - \alpha\omega dt + \epsilon d\eta,
\end{equation}
where $\omega$ is the vorticity field, $\mathbf{v}$ is the corresponding velocity field, and $d\eta$ represents stochastic forcing. 

We use the PFI benchmark~\citep{chen2024probabilistic}, comprising 1,000 simulated trajectories and $2\times10^5$ single-channel vorticity snapshots on a $128\times128$ periodic grid, with an 80/20 training--test split. Given an observed vorticity field, the objective is to generate its future evolution over multiple time steps. PhiBE-Flow is compared against MCVD and PFI; additional dataset details are provided in Appendix~\ref{app:nse}.

{
\begin{figure}[!ht]
    \centering
    \includegraphics[
        width=\linewidth,
        trim=0 8 0 3,
        clip
    ]{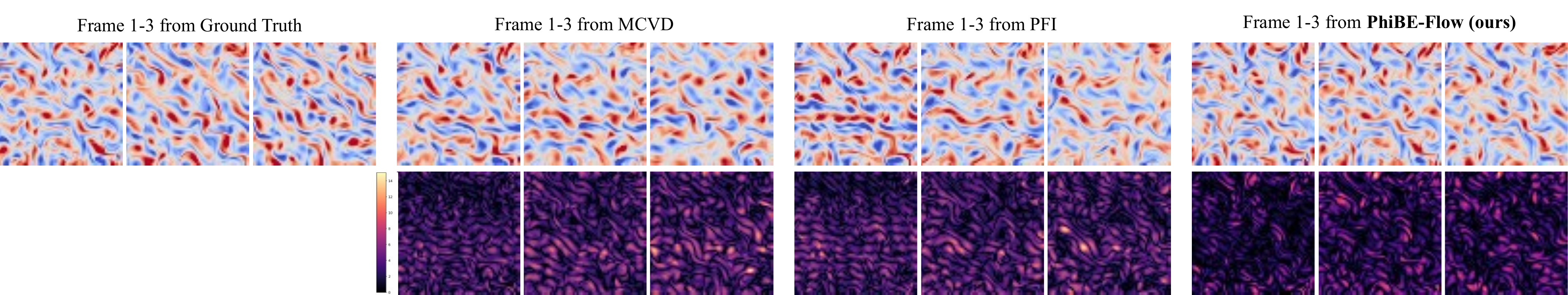}
    \caption{
    \textbf{Temporal generation results on Navier--Stokes.}
    Ground-truth vorticity fields and results from MCVD, PFI, and
    PhiBE-Flow, together with their corresponding error maps.
    }
    \label{fig:nse_vis}
\end{figure}
}

We evaluate Navier--Stokes generation results using the enstrophy spectrum, 
which can evaluate whether the results preserve the distribution of vorticity
across spatial scales.
For a vorticity field $\omega$, we compute its Fourier transform
$\widehat{\omega}$ and define the spectral enstrophy as
\[
    E(\mathbf{k})
    =
    \frac{1}{2}
    \left|
        \widehat{\omega}(\mathbf{k})
    \right|^2.
\]
The spectrum is obtained by averaging $E(\mathbf{k})$ over radial
wavenumber shells and subsequently averaging the resulting spectra over
the evaluation samples. The same procedure is applied to the
ground-truth fields and generation results from all compared methods.

\noindent
\begin{minipage}[t]{0.56\textwidth}
\vspace{0pt}
As shown in Figure~\ref{fig:nse_vis}, PhiBE-Flow more accurately captures
the evolution of the vorticity field and preserves fine-scale spatial
structures over the generation horizon. Its error maps exhibit smaller
deviations from the ground truth than those of MCVD and PFI.

The difference is also reflected in the enstrophy spectrum
(Figure~\ref{fig:nse_enstrophy}), which characterizes the distribution of
vorticity across spatial scales. PhiBE-Flow remains closer to the
ground-truth spectrum over a broad range of wavenumbers, including the
high-frequency regime associated with small-scale flow structures.
In contrast, MCVD exhibits a stronger loss of high-frequency energy,
while PFI shows larger deviations at small scales. These results indicate
that PhiBE-Flow better preserves both the generated flow structures and
their multiscale physical statistics.
\end{minipage}
\hfill
\begin{minipage}[t]{0.40\textwidth}
\vspace{0pt}
\centering
\includegraphics[
    width=\linewidth,
    trim=0 5 0 5,
    clip
]{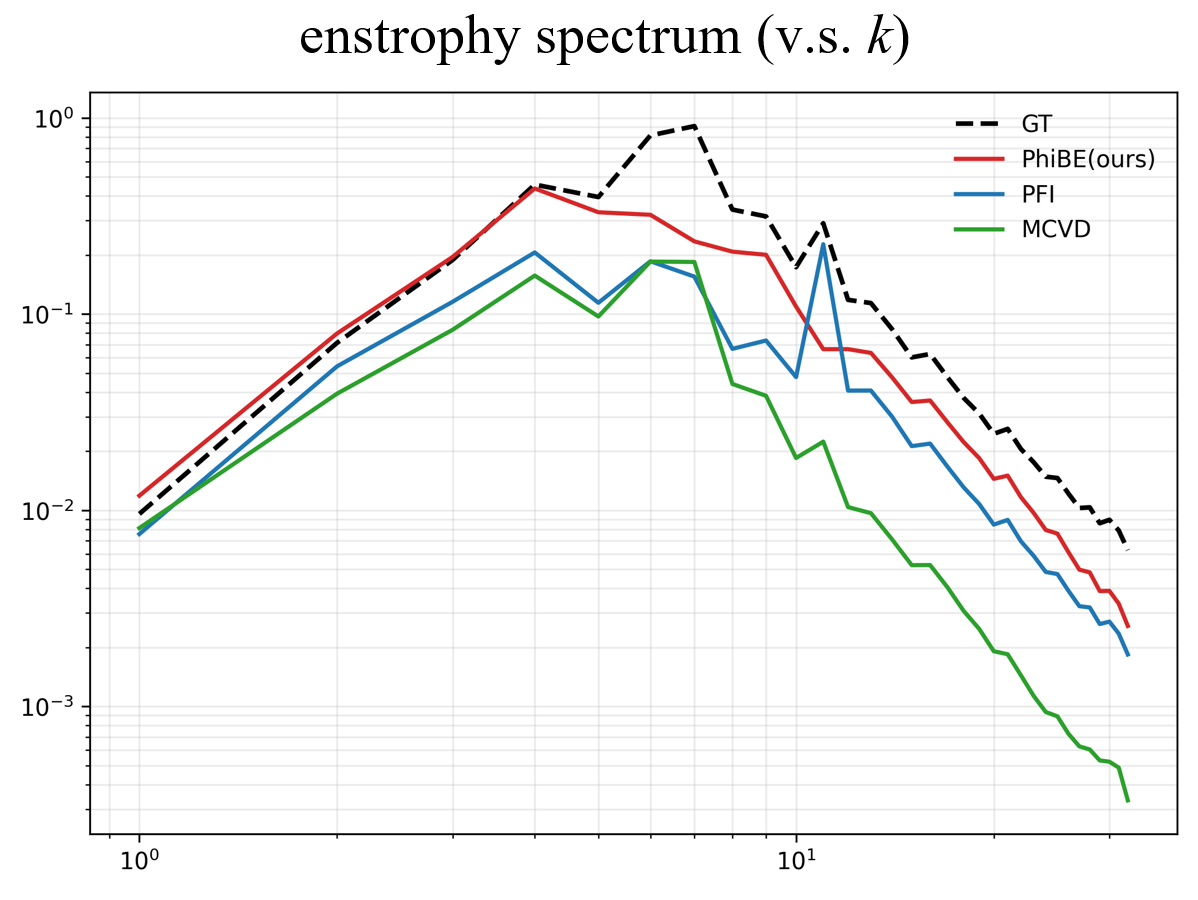}
\captionof{figure}{
Enstrophy spectrum comparison on the Navier--Stokes dataset.
}
\label{fig:nse_enstrophy}
\end{minipage}

\subsection{Video Generation}
\label{sec:kth}

Finally, we evaluate PhiBE-Flow on the KTH human action dataset~\citep{schuldt2004recognizing}, which contains 2,391 video sequences covering six action categories. We resize frames to $64\times64$ pixels and use an 80/20 training--test split.
Following latent video generation approaches, video frames are encoded
into a compact latent representation using a pretrained VQ-VAE, and
PhiBE-Flow models their temporal evolution in the latent space. The
predicted latent states are subsequently decoded back into video frames.
We compare against MCVD and PFI and evaluate the results using mean squared error (MSE),
Fr\'echet Video Distance (FVD)~\citep{unterthiner2018towards},
peak signal-to-noise ratio (PSNR), and structural similarity
(SSIM)~\citep{wang2004image}. Details of the representation and evaluation
protocol are provided in Appendix~\ref{app:kth}.
\begin{table}[!ht]
\centering
\caption{
\textbf{Metric results on KTH.}
Standard metrics compare generated results with the original ground
truth. Adjusted metrics compare generated results with the
reconstructed ground truth.
}
\label{tab:kth}

\setlength{\tabcolsep}{3.5pt}
\small
\begin{tabular}{lcccccccc}
\toprule

& \multicolumn{4}{c}{Standard metrics}
& \multicolumn{4}{c}{Adjusted metrics} \\

\cmidrule(lr){2-5}
\cmidrule(lr){6-9}

\textit{Method}
& MSE($\times10^{-2}$)$\downarrow$
& FVD$\downarrow$
& PSNR$\uparrow$
& SSIM$\uparrow$
& MSE($\times10^{-2}$)$\downarrow$
& FVD$\downarrow$
& PSNR$\uparrow$
& SSIM$\uparrow$ \\

\midrule

MCVD
& 0.8550
& 307.3
& 20.68
& 0.6986
& --
& --
& --
& -- \\

PFI
& 0.6637
& 280.1
& 21.78
& 0.7254
& 0.6576
& 247.1
& 21.82
& 0.7372 \\

\textbf{PhiBE-Flow}
& \textbf{0.4405}
& \textbf{227.5}
& \textbf{23.56}
& \textbf{0.7663}
& \textbf{0.4305}
& \textbf{192.7}
& \textbf{23.66}
& \textbf{0.7800} \\

\bottomrule
\end{tabular}
\end{table}

As shown in Table~\ref{tab:kth}, PhiBE-Flow achieves the best
performance across all four metrics. In particular, it substantially
reduces both MSE and FVD while improving PSNR and SSIM compared with
MCVD and PFI.
The qualitative results in Figure~\ref{fig:kth_comp2} are consistent
with the quantitative comparison. PhiBE-Flow better preserves the
human-body structure and motion over the generation horizon and
exhibits less accumulated motion drift than MCVD and PFI. These results
suggest that directly learning the physical-time probability velocity
provides a stable representation of temporal evolution even for
high-dimensional visual observations.

\begin{figure}[!ht]
    \centering
    \includegraphics[width=\linewidth]{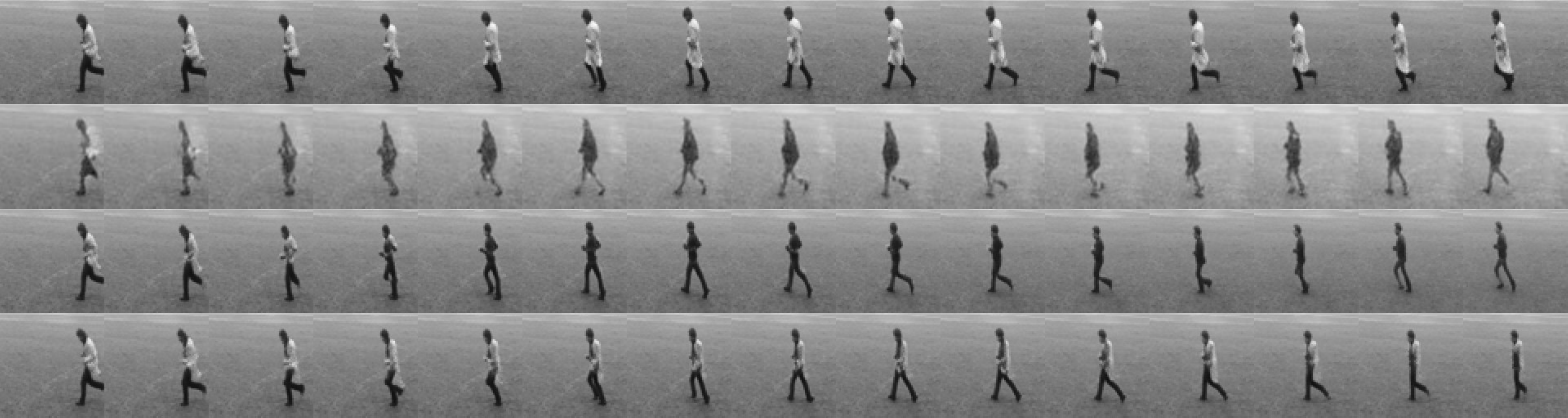}
    \caption{
    \textbf{Video generation on the KTH dataset.}
    The top row shows the ground-truth trajectory. The following three
    rows show generation results from MCVD, PFI, and PhiBE-Flow, respectively.
    }
    \label{fig:kth_comp2}
\end{figure}

\section{Conclusion}
We introduced \textbf{PhiBE-Flow}, a model-free framework that learns the SDE-induced probability velocity field from discrete observations to generate continuous-time trajectories consistent with density evolution. We established convergence guarantees and demonstrated its effectiveness on low-dimensional stochastic systems, Navier--Stokes dynamics, and video generation.

\textbf{Limitations and Future Work.}
PhiBE-Flow assumes that the observed dynamics admit an underlying SDE
representation, which may not hold for all real-world temporal processes.
Moreover, directly learning the probability velocity in high-dimensional
observation spaces can be computationally expensive, motivating the use of
latent representations. The resulting performance therefore depends on the
encoder--decoder, whose latent space should preserve the essential geometric
and dynamical structure of the original data. Future work will extend the
framework and its theoretical analysis to more general stochastic dynamics and more complex real-world
systems.

\bibliographystyle{plainnat}
\bibliography{references}  

\appendix
\section{Assumptions and Proofs}
\label{app:theory}

\subsection{Population Conditions and Objective Identity}
\label{app:regularity}

All spatial integrals are over $\mathbb R^d$. Uniform population
estimates are on $\mathcal Z=\Omega\times[0,T]$, where $\Omega$ is
compact and convex. Vector norms are Euclidean, matrix norms are
operator norms unless indicated otherwise, and $A:B$ denotes the
Frobenius inner product. We use
$(\nabla\cdot A)_i=\sum_j\partial_{x_j}A_{ij}$ and
$(\nabla u)_{ij}=\partial_{x_j}u_i$.

Using the same marginal density $\rho_t$ as in
\eqref{eq:probability_velocity}, define the population PhiBE field
\begin{equation}
\widehat v(x,t)=\widehat b(x)
-\frac12\widehat\Sigma(x)\nabla_x\log\rho_t(x)
-\frac12\nabla_x\cdot\widehat\Sigma(x).
\label{eq:population_phibe_field}
\end{equation}

\paragraph{Population conditions.}
The SDE is nonexplosive with locally smooth coefficients and is
defined through $T+h_0$ for some $h_0>0$.
Its density $\rho_t$ is positive, and
$\rho,\nabla_x\rho,\nabla_x^2\rho$ are continuous near $\mathcal Z$.
The fields $v$ and $\widehat v$ are locally Lipschitz in $x$,
uniformly in $t\in[0,T]$ for each fixed $h$.
The second-order semigroup expansions used in
Appendix~\ref{app:phibe_consistency} hold uniformly on $\Omega$,
with $o(\Delta t^2)$ remainders, including one spatial derivative of the
increment second moment. The coefficient functions appearing in
these expansions and their stated derivatives are continuous near
$\Omega$. These are assumptions on the short-time expansions;
local smoothness alone is not used to claim uniform remainder bounds.

\paragraph{Integrability and integration by parts.}
For each relevant $t$, assume
$\widehat b,\widehat\Sigma,\widehat v\in L^2(\rho_t)$ and
$\rho_t,\widehat\Sigma\in C^1_{\mathrm{loc}}(\mathbb R^d)$.
Admissible fields $u\in C^1_{\mathrm{loc}}$ satisfy
$u\in L^2(\rho_t)$ and
$\widehat\Sigma:\nabla u\in L^1(\rho_t)$.
Assume the analogous integrability for $b,\Sigma,v$ when using
the exact score-free objective, and time integrability when
integrating these identities over $[0,T]$.
Weak spatial derivatives are also allowed whenever the following
integration-by-parts identity remains valid.

For a fixed $t$, define
\[
\mathcal J_{h,t}(u)
=\int_{\mathbb R^d}
\bigl[\|u\|^2-2u\cdot\widehat b
-\widehat\Sigma:\nabla u\bigr]\rho_t\,dx.
\]
Integration by parts gives
\[
-\int (\widehat\Sigma:\nabla u)\rho_t\,dx
=\int u\cdot
(\widehat\Sigma\nabla\log\rho_t+
\nabla\cdot\widehat\Sigma)\rho_t\,dx.
\]
For noncompactly supported $u$, apply this first to $\chi_Ru$,
where $\chi_R$ is a smooth cutoff with
$\|\nabla\chi_R\|_\infty\le C/R$. The extra term is at most
\[
\frac C R\int\|u\|\,\|\widehat\Sigma\|_{\mathrm F}\rho_t\,dx
\le\frac C R\|u\|_{L^2(\rho_t)}
\|\widehat\Sigma\|_{L^2(\rho_t;\mathrm F)}\longrightarrow0.
\]
The other terms converge by the stated integrability assumptions.
Completing the square therefore yields
\begin{equation}
\mathcal J_{h,t}(u)
=\|u-\widehat v\|_{L^2(\rho_t)}^2
-\|\widehat v\|_{L^2(\rho_t)}^2.
\label{eq:population_objective_identity}
\end{equation}
The identity extends continuously from smooth compactly supported
fields to $L^2(\rho_t)$, with unique minimizer $\widehat v$.
Here $\widehat v$ uses the true density $\rho_t$ as in
\eqref{eq:population_phibe_field}.

\subsection{Proof of the PhiBE Coefficient Approximation}
\label{app:phibe_consistency}

\begin{proof}[Proof of Lemma~\ref{lem:phibe_consistency}]
Let $\mathcal L f=b\cdot\nabla f+\frac12\Sigma:\nabla^2f$.
For $f_j(y)=y_j$, the assumed semigroup expansion gives
\[
\widehat b_{j}(x)
=\frac{\mathbb E[f_j(X_{t+\Delta t})\mid X_t=x]-f_j(x)}{\Delta t}
=b_j(x)+\frac {\Delta t}2\mathcal L b_j(x)+o(\Delta t)
\]
uniformly on $\Omega$. Boundedness of $\mathcal L b$ on $\Omega$
and a sufficiently large $L_b$ give the first inequality for small $h$.

For $g_{jk,x}(y)=(y_j-x_j)(y_k-x_k)$, with $x$ held fixed when
applying $\mathcal L$ in $y$, we have
$g_{jk,x}(x)=0$ and $\mathcal Lg_{jk,x}(x)=\Sigma_{jk}(x)$.
Consequently,
\[
\widehat\Sigma_{jk}(x)
=\Sigma_{jk}(x)+\Delta tR_{jk}(x)+o(\Delta t),
\qquad R_{jk}(x):=\frac12\mathcal L^2g_{jk,x}(x).
\]
The expansion and its first spatial derivative hold uniformly
on $\Omega$ by assumption. Thus
\[
\|\Sigma-\widehat\Sigma\|_{\infty,\Omega}
\le \Delta t\|R\|_{\infty,\Omega}+o(\Delta t),
\qquad
\|\nabla\cdot(\Sigma-\widehat\Sigma)\|_{\infty,\Omega}
\le \Delta t\|\nabla\cdot R\|_{\infty,\Omega}+o(\Delta t).
\]
This proves the remaining inequalities.
\end{proof}

\subsection{Proof of the General Population Approximation}
\label{app:proof_general_population}

\begin{proof}[Proof of Theorem~\ref{thm:general_population}]
Positivity and continuity on $\mathcal Z$ imply
$\inf_{\mathcal Z}\rho_t>0$. Hence
\[
S_1:=\sup_{\mathcal Z}\|\nabla\log\rho_t\|<\infty,
\qquad
S_2:=\sup_{\mathcal Z}\|\nabla^2\log\rho_t\|<\infty.
\]
Subtracting \eqref{eq:population_phibe_field} from
\eqref{eq:probability_velocity} and applying
Lemma~\ref{lem:phibe_consistency} gives
\[
\|v-\widehat v\|_{\infty,\mathcal Z}
\le\left(L_b+\frac12L_\Sigma S_1+
\frac12L_{\mathrm{div}}\right)\Delta t+o(\Delta t).
\]
Take $L_v=L_b+\frac12L_\Sigma S_1+\frac12L_{\mathrm{div}}$.
The spatial derivatives of $v$ are bounded on $\mathcal Z$;
for example, a positive Lipschitz constant $K$ can be chosen at
least as large as
\[
\|\nabla b\|_{\infty,\Omega}
+\frac12\|\nabla\Sigma\|_{\infty,\Omega}S_1
+\frac12\|\Sigma\|_{\infty,\Omega}S_2
+\frac12\|\nabla(\nabla\cdot\Sigma)\|_{\infty,\Omega}.
\]
Here the norm of $\nabla\Sigma$ is the induced tensor norm.
Convexity of $\Omega$ gives
$\|v(x,t)-v(y,t)\|\le K\|x-y\|$ for $x,y\in\Omega$.
As long as both flows remain in $\Omega$, their common initial
condition implies
\[
\|x_t-\widehat x_t\|
\le K\int_0^t\|x_s-\widehat x_s\|\,ds
+t\bigl(L_v\Delta t+o(\Delta t)\bigr).
\]
Gr\"onwall's inequality yields
\[
\|x_t-\widehat x_t\|
\le\frac{e^{Kt}-1}{K}\bigl(L_v\Delta t+o(\Delta t)\bigr),
\]
uniformly for the stated times $t\le T$.
\end{proof}

\subsection{Constant Isotropic Diffusion}
\label{app:constant_scalar}

If $\Sigma=\sigma^2I_d$ is known, then
$\Sigma:\nabla u=\sigma^2\nabla\cdot u$ in the score-free
objective. The same simplification applies to a covariance
approximation constrained to be constant and isotropic,
$\widetilde\Sigma=\widehat\Sigma^{\,2}I_d$.
If
\[
|\sigma^2-\widehat\Sigma^{\,2}|
\le L_{\sigma^2}\Delta t+o(\Delta t),
\]
then, with
$\widetilde v=\widehat b-\frac12\widehat\Sigma^{\,2}
\nabla\log\rho_t$, the proof above gives
\[
\|v-\widetilde v\|_{\infty,\mathcal Z}
\le\left(L_b+\frac12L_{\sigma^2}S_1\right)\Delta t+o(\Delta t).
\]
The same flow bound follows under the same containment condition,
using any positive $K$ at least
$\|\nabla b\|_{\infty,\Omega}+\frac12\sigma^2S_2$.
The raw increment second moment $\widehat\Sigma(x)$ in
\eqref{eq:phibe_diffusion} need not be constant even when $\Sigma$
is constant, so this simplification does not automatically apply
to the empirical objective~\eqref{eq:empirical_phibe_objective}.

\subsection{Ornstein--Uhlenbeck Population Approximation}
\label{app:proof_ou_population}

\begin{corollary}[OU population approximation]
\label{cor:ou_population}
Consider
\[
dX_t=AX_t\,dt+\Gamma\,dB_t,\qquad
\Sigma:=\Gamma\Gamma^\top\succ0,
\qquad X_0\sim\mathcal N(m_0,C_0),\quad C_0\succ0,
\]
with $X_0$ independent of $B$. Let $\Omega\subset\mathbb R^d$ be
compact and $h=\Delta t$. The first-order PhiBE velocity is
\[
\widehat v(x,t)=\widehat Ax
-\frac12\nabla\cdot\widehat\Sigma(x)
+\frac12\widehat\Sigma(x)C_t^{-1}(x-m_t),
\]
where
\[
\begin{aligned}
M_h&=e^{Ah}-I,\qquad \widehat A=M_h/h,\\
Q_h&=\int_0^h e^{As}\Sigma e^{A^\top s}\,ds,\qquad
\widehat\Sigma(x)=\frac{Q_h+M_hxx^\top M_h^\top}{h},
\end{aligned}
\]
and $m_t,C_t$ are the marginal mean and covariance. There are
constants $L_{\mathrm{OU}},K_{\mathrm{OU}}>0$, independent of $h$,
such that, uniformly on $\Omega\times[0,T]$ as $h\downarrow0$,
\[
\|v(x,t)-\widehat v(x,t)\|
\le L_{\mathrm{OU}}h+O(h^2).
\]
For solutions of $\dot x_t=v(x_t,t)$ and
$\dot{\widehat x}_t=\widehat v(\widehat x_t,t)$ with the same initial
state, while both remain in $\Omega$,
\[
\|x_t-\widehat x_t\|
\le\frac{e^{K_{\mathrm{OU}}t}-1}{K_{\mathrm{OU}}}
L_{\mathrm{OU}}h+O(h^2),\qquad 0\le t\le T.
\]
\end{corollary}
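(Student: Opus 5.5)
The proof is a direct computation rather than an appeal to the abstract semigroup assumptions of Lemma~\ref{lem:phibe_consistency}: for the OU process every object in \eqref{eq:population_phibe_field} has a closed form that is analytic in $h$. This is also why the remainder improves from $o(h)$ to $O(h^2)$.

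\textbf{Step 1: PhiBE coefficients.} By variation of constants,
\[
X_{t+h}=e^{Ah}X_t+\xi_h,\qquad \xi_h=\int_t^{t+h}e^{A(t+h-s)}\Gamma\,dB_s\sim\mathcal N(0,Q_h),
\]
with $\xi_h$ independent of $X_t$. Hence
\[
\mathbb E[X_{t+h}-X_t\mid X_t=x]=M_hx,
\qquad
\mathbb E\bigl[(X_{t+h}-X_t)(X_{t+h}-X_t)^\top\mid X_t=x\bigr]=Q_h+M_hxx^\top M_h^\top .
\]
Dividing by $h$ gives $\widehat b(x)=\widehat Ax$ and the stated $\widehat\Sigma$.

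\textbf{Step 2: explicit velocities.} The marginal is Gaussian, $X_t\sim\mathcal N(m_t,C_t)$, with
\[
m_t=e^{At}m_0,\qquad C_t=e^{At}C_0e^{A^\top t}+\int_0^te^{As}\Sigma e^{A^\top s}\,ds .
\]
Therefore $\nabla\log\rho_t(x)=-C_t^{-1}(x-m_t)$. Substituting into \eqref{eq:population_phibe_field} gives the displayed formula for $\widehat v$. Similarly, \eqref{eq:probability_velocity}, using $\nabla\cdot\Sigma=0$, gives
\[
v(x,t)=Ax+\tfrac12\Sigma C_t^{-1}(x-m_t).
\]
Since $C_0\succ0$, each $C_t\succ0$. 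The map $t\mapsto C_t$ is continuous, so $\sup_{t\le T}\|C_t^{-1}\|<\infty$ and $\sup_{t\le T}\|m_t\|<\infty$.

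\textbf{Step 3: coefficient errors.} Expanding the exponentials in powers of $h$:

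\begin{itemize}
\item $\widehat A-A=\tfrac h2A^2+O(h^2)$.
\item $Q_h/h-\Sigma=\tfrac h2(A\Sigma+\Sigma A^\top)+O(h^2)$.
\item Since $M_h=hA+O(h^2)$, we get $M_hxx^\top M_h^\top/h=h\,Axx^\top A^\top+O(h^2)$, uniformly on the compact set $\Omega$.
\item For the divergence term, $\bigl(\nabla\cdot(M_hxx^\top M_h^\top)\bigr)_i=\sum_j\partial_j\bigl[(M_hx)_i(M_hx)_j\bigr]=(M_h^2x)_i+(M_hx)_i\operatorname{tr}M_h$. This is $O(h^2)$, so after division by $h$ it is $O(h)$.
\end{itemize}

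Collecting terms,
\[
v-\widehat v=(A-\widehat A)x+\tfrac12\nabla\cdot\widehat\Sigma+\tfrac12(\Sigma-\widehat\Sigma)C_t^{-1}(x-m_t).
\]
Each piece is $h$ times a polynomial in $x$ with coefficients continuous in $t$, plus $O(h^2)$. Taking the supremum over $\Omega\times[0,T]$ defines $L_{\mathrm{OU}}$, for instance
\[
L_{\mathrm{OU}}=\sup_{\Omega\times[0,T]}\Bigl\|\tfrac12A^2x-\tfrac12A^2x-\tfrac12Axx^\top A^\top\cdots\Bigr\|,
\]
where the only care needed is tracking the leading-order term of each piece.

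\textbf{Step 4: trajectories.} For each $t$, $v(\cdot,t)$ is affine in $x$ with Lipschitz constant
\[
K_{\mathrm{OU}}:=\|A\|+\tfrac12\|\Sigma\|\sup_{t\le T}\|C_t^{-1}\|,
\]
which is independent of $h$. While both flows remain in $\Omega$, write
\[
\tfrac{d}{dt}(x_t-\widehat x_t)=\bigl[v(x_t,t)-v(\widehat x_t,t)\bigr]+\bigl[v(\widehat x_t,t)-\widehat v(\widehat x_t,t)\bigr].
\]
Integrating and applying Gr\"onwall exactly as in the proof of Theorem~\ref{thm:general_population} gives the stated flow bound, with $O(h^2)$ in place of $o(h)$.

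\textbf{Main obstacle.} The main difficulty is bookkeeping rather than anything conceptual. One must make sure the $O(h^2)$ remainders are uniform in $(x,t)\in\Omega\times[0,T]$. This follows because all series in $h$ have coefficients that are polynomials in $x$ multiplied by continuous functions of $t$ on a compact set. One must also handle the divergence of the $x$-dependent part of $\widehat\Sigma$ correctly; this term is the one that distinguishes the PhiBE velocity from a naive plug-in.
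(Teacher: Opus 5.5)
Your proposal is correct and follows the paper's proof essentially step for step: the exact Gaussian transition gives $\widehat b,\widehat\Sigma$, the Gaussian score is explicit, and you expand $\widehat A$, $Q_h/h$, $M_hxx^\top M_h^\top/h$ and $\nabla\cdot\widehat\Sigma$ via matrix exponentials, then close with Gr\"onwall using the $h$-independent Lipschitz constant of the affine field $v$. The only blemish is your illustrative formula for $L_{\mathrm{OU}}$, whose leading term should read $\tfrac h2\bigl[\operatorname{tr}(A)Ax-H(x)C_t^{-1}(x-m_t)\bigr]$ with $H(x)=\tfrac12(A\Sigma+\Sigma A^\top)+Axx^\top A^\top$; your existence argument (polynomial-in-$x$ coefficients continuous in $t$ on a compact set) already suffices without it.
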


\begin{proof}
The Gaussian marginals have
\[
m_t=e^{At}m_0,\qquad
C_t=e^{At}C_0e^{A^\top t}
+\int_0^t e^{As}\Sigma e^{A^\top s}\,ds.
\]
Thus $\nabla\log\rho_t(x)=-C_t^{-1}(x-m_t)$ and
$v(x,t)=Ax+\tfrac12\Sigma C_t^{-1}(x-m_t)$.
Conditionally on $X_t=x$, the increment is $M_hx+\xi_h$, where
$\xi_h\sim\mathcal N(0,Q_h)$. Its first and raw second moments give
$\widehat b(x)=\widehat Ax$ and $\widehat\Sigma$ above.
Direct differentiation yields
\[
\nabla\cdot\widehat\Sigma(x)
=\frac{M_h^2x+\operatorname{tr}(M_h)M_hx}{h},
\]
which also gives the stated expression for $\widehat v$.

Matrix exponential expansion gives, uniformly on $\Omega$ and also
after the spatial derivative used below,
\[
\widehat A=A+\tfrac12A^2h+O(h^2),\qquad
\widehat\Sigma(x)=\Sigma+hH(x)+O(h^2),
\]
where
\[
H(x)=\tfrac12(A\Sigma+\Sigma A^\top)+Axx^\top A^\top,
\qquad
\nabla\cdot H(x)=(A^2+\operatorname{tr}(A)A)x.
\]
Consequently,
\[
v(x,t)-\widehat v(x,t)
=\frac h2\left[-A^2x+\nabla\cdot H(x)
-H(x)C_t^{-1}(x-m_t)\right]+O(h^2).
\]
Since $C_0\succ0$, $C_t^{-1}$ is uniformly bounded on $[0,T]$.
With
\[
R=\sup_{x\in\Omega}\|x\|,\qquad
S_{\mathrm{OU}}=\sup_{(x,t)\in\Omega\times[0,T]}
\|C_t^{-1}(x-m_t)\|,
\]
the velocity bound follows by taking $L_{\mathrm{OU}}$ at least
\[
\tfrac12R\|A^2\|
+\tfrac12\|\nabla\cdot H\|_{\infty,\Omega}
+\tfrac12\|H\|_{\infty,\Omega}S_{\mathrm{OU}}.
\]
Finally, $v$ is spatially Lipschitz with any positive constant
\[
K_{\mathrm{OU}}\ge
\sup_{0\le t\le T}\|A+\tfrac12\Sigma C_t^{-1}\|.
\]
Subtracting the two ODEs and applying Gr\"onwall's inequality gives
the trajectory bound, with a remainder uniform for $t\in[0,T]$
while the trajectories remain in $\Omega$.
\end{proof}

\subsection{Linear-Basis Finite-Sample Setting and Proof}
\label{app:finite_sample_setting}

\paragraph{Independent trajectories and the sampling measure.}
Let $\Xi_k=(X_{t_0}^{(k)},\ldots,X_{t_L}^{(k)})$, $k=1,\ldots,N$,
be independent copies of an SDE trajectory observed at
$t_j=jh$, where $L=N_t\ge1$, $0<h=\Delta t\le1$, and $Lh\le T$.
Transitions within a trajectory are not assumed independent.
Write $W_j^{(k)}=X_{t_{j+1}}^{(k)}-X_{t_j}^{(k)}$ and
\[
\mu_L(dx,dt)=\frac1L\sum_{j=0}^{L-1}
\rho_{t_j}(x)\,dx\,\delta_{t_j}(dt).
\]
For one trajectory $\Xi$, define
\[
\ell_\Xi(u)=\frac1L\sum_{j=0}^{L-1}
\left[\|u(X_{t_j},t_j)\|^2
-2u(X_{t_j},t_j)\cdot\frac{W_j}{h}
-\frac{W_jW_j^\top}{h}:\nabla_xu(X_{t_j},t_j)\right].
\]
Set $\mathcal J_N(u)=N^{-1}\sum_k\ell_{\Xi_k}(u)$ and
$\mathcal J(u)=\mathbb E\ell_\Xi(u)$. This is the uniform-grid
empirical objective~\eqref{eq:empirical_phibe_objective}.
Conditional expectation and the integration-by-parts identity in
Appendix~\ref{app:regularity} give
\begin{equation}
\label{eq:finite_population_identity}
\mathcal J(u)=\|u-\widehat v\|_{L^2(\mu_L)}^2
-\|\widehat v\|_{L^2(\mu_L)}^2.
\end{equation}
This objective averages over the observation times; no time
quadrature claim is needed for this identity.

\paragraph{Fixed linear-basis models.}
Let $\psi_1,\ldots,\psi_p:\mathbb R^d\times[0,T]\to\mathbb R^d$
be deterministic basis fields and $h=\Delta t$, fixed independently of the training
sample, for which the derivatives and expectations below are
well defined. Write
\[
\Psi(x,t)=
\begin{pmatrix}\psi_1(x,t)&\cdots&\psi_p(x,t)\end{pmatrix},
\qquad
\mathcal V_p=\{v_\theta=\Psi\theta:\theta\in\mathbb R^p\}.
\]
The basis fields may be nonlinear in $(x,t)$; only their
coefficients are fitted.
For one trajectory, define
\[
H_\Xi=\frac1L\sum_{j=0}^{L-1}
\Psi(X_{t_j},t_j)^\top\Psi(X_{t_j},t_j),
\]
and, for $\ell=1,\ldots,p$,
\[
(c_\Xi)_\ell=\frac1L\sum_{j=0}^{L-1}
\left[
\psi_\ell(X_{t_j},t_j)\cdot\frac{W_j}{h}
+\frac{W_jW_j^\top}{2h}:
\nabla_x\psi_\ell(X_{t_j},t_j)
\right].
\]
Set
\[
H=\mathbb E H_\Xi,\qquad c=\mathbb E c_\Xi,
\qquad
H_N=\frac1N\sum_{k=1}^NH_{\Xi_k},
\qquad
c_N=\frac1N\sum_{k=1}^Nc_{\Xi_k}.
\]
The empirical and population objectives satisfy
\[
\mathcal J_N(v_\theta)=\theta^\top H_N\theta-2c_N^\top\theta,
\qquad
\mathcal J(v_\theta)=\theta^\top H\theta-2c^\top\theta.
\]
When $H\succ0$, let $\theta_\star=H^{-1}c$.
Then $\Psi\theta_\star$ is the population minimizer in
$\mathcal V_p$. By \eqref{eq:finite_population_identity},
it equals $\widehat v$ under the realizability condition
$\widehat v\in\mathcal V_p$.
Define
\[
\theta_N=
\begin{cases}
H_N^{-1}c_N,& H_N\succ0,\\
0,&\text{otherwise},
\end{cases}
\qquad
v_N=\Psi\theta_N.
\]
On the event $H_N\succ0$, this is the unique global empirical
minimizer in $\mathcal V_p$.
For a scalar random variable $U$, write
\[
\|U\|_{\psi_1}
:=\inf\{a>0:\mathbb E\exp(|U|/a)\le2\}.
\]

\begin{assumption}[Linear-basis sampling conditions]
\label{ass:finite_sample_class}
In the preceding independent-trajectory setting, the identity
\eqref{eq:finite_population_identity} holds on $\mathcal V_p$,
$H\succ0$, and $\widehat v\in\mathcal V_p$.
Let $h=\Delta t$.
Define
\[
G_\Xi=H^{-1/2}H_\Xi H^{-1/2},
\qquad
\eta_\Xi=H^{-1/2}(c_\Xi-H_\Xi\theta_\star).
\]
There exist constants $K_G,K_R>0$, independent of $N$ and
uniform over the observation grids under consideration, such that
\[
\sup_{\|a\|_2=1}
\|a^\top(G_\Xi-I_p)a\|_{\psi_1}\le K_G,
\qquad
\max_{1\le\ell\le p}\|(\eta_\Xi)_\ell\|_{\psi_1}
\le\frac{K_R}{\sqrt h}.
\]
\end{assumption}

The moment conditions apply to the chosen basis and trajectory
distribution; linearity alone does not imply them. Their constants
may depend on the fixed basis and its conditioning. No uniform
claim over growing basis families is made without uniform bounds.

\begin{proof}[Proof of Theorem~\ref{thm:general_finite_sample}]
The definitions give
\[
\mathbb E G_\Xi=I_p,\qquad
\mathbb E\eta_\Xi=H^{-1/2}(c-H\theta_\star)=0.
\]
Put
\[
G_N=H^{-1/2}H_NH^{-1/2},
\qquad
\overline\eta_N=\frac1N\sum_{k=1}^N\eta_{\Xi_k},
\qquad
\ell_\delta=\log(4p/\delta).
\]
All concentration arguments concern the $N$ independent
trajectory contributions; observations within a trajectory
need not be independent.

Choose a $1/4$-net $\mathcal A$ of the Euclidean unit sphere
in $\mathbb R^p$ with $|\mathcal A|\le9^p$.
Scalar Bernstein concentration and a union bound give, with
probability at least $1-\delta/2$,
\[
\max_{a\in\mathcal A}|a^\top(G_N-I_p)a|
\le CK_G
\left[
\sqrt{\frac{p+\log(4/\delta)}N}
+\frac{p+\log(4/\delta)}N
\right].
\]
For any symmetric matrix $B$, the net inequality gives
\[
\|B\|_{\mathrm{op}}
\le2\max_{a\in\mathcal A}|a^\top Ba|.
\]
Thus, if
\[
N\ge C(1+K_G+K_G^2)
\bigl[p+\log(4/\delta)\bigr],
\]
then $\|G_N-I_p\|_{\mathrm{op}}\le1/2$.
On this event, $H_N\succ0$ and
$\|G_N^{-1}\|_{\mathrm{op}}\le2$.

Applying scalar Bernstein concentration to each coordinate of
$\overline\eta_N$ and taking a union bound gives, with probability
at least $1-\delta/2$,
\[
\|\overline\eta_N\|_2
\le CK_R\sqrt{\frac ph}
\left[
\sqrt{\frac{\ell_\delta}N}
+\frac{\ell_\delta}N
\right].
\]
On the intersection of these events, the empirical normal equations
imply
\[
H_N(\theta_N-\theta_\star)=c_N-H_N\theta_\star,
\]
and therefore
\[
G_NH^{1/2}(\theta_N-\theta_\star)=\overline\eta_N.
\]
By realizability and the definition of $H$,
\[
\|v_N-\widehat v\|_{L^2(\mu_L)}^2
=(\theta_N-\theta_\star)^\top
H(\theta_N-\theta_\star).
\]
Consequently,
\[
\|v_N-\widehat v\|_{L^2(\mu_L)}
\le2\|\overline\eta_N\|_2
\le CK_R\sqrt{\frac ph}
\left[
\sqrt{\frac{\ell_\delta}N}
+\frac{\ell_\delta}N
\right].
\]
For sufficiently large $c_0$ depending only on $K_G$,
$N\ge c_0p\log(2p/\delta)$ ensures both the Gram-matrix condition
and $N\ge\ell_\delta$.
The latter gives
$\ell_\delta/N\le\sqrt{\ell_\delta/N}$.
Since $\ell_\delta\le2\log(2p/\delta)$, this proves
\[
\|v_N-\widehat v\|_{L^2(\mu_L)}
\le C_{\mathcal V}
\sqrt{\frac{p\log(2p/\delta)}{Nh}}
\]
with probability at least $1-\delta$.
\end{proof}

\paragraph{When the basis is not realizable.}
If $\widehat v\notin\mathcal V_p$, the population minimizer
$v_p=\Psi\theta_\star$ is its $L^2(\mu_L)$ projection onto
$\mathcal V_p$. Under the other conditions of
Assumption~\ref{ass:finite_sample_class}, with the residual
$\eta_\Xi$ defined using this $\theta_\star$, the same proof bounds
$\|v_N-v_p\|_{L^2(\mu_L)}$. Consequently, with the same probability,
\[
\|v_N-\widehat v\|_{L^2(\mu_L)}
\le \inf_{u\in\mathcal V_p}\|u-\widehat v\|_{L^2(\mu_L)}
+C_{\mathcal V}\sqrt{\frac{p\log(2p/\delta)}{Nh}}.
\]
Thus freezing neural features gives the required linear structure,
but does not by itself establish realizability or the moment bounds.

\subsection{Ornstein--Uhlenbeck Finite-Sample Convergence}
\label{app:proof_ou_finite_sample}

Consider
\[
dX_t=A_{\mathrm{OU}}(X_t-m)\,dt+S\,dB_t,
\qquad X_0\sim\mathcal N(m_0,C_0),\quad C_0\succ0,
\]
with $X_0$ independent of $B$. Observe $N$ independent trajectories
at $t_j=jh$, $j=0,\ldots,L$, where $0<h=\Delta t\le1$ and $Lh\le T$.
Write $W_j^{(k)}=X_{t_{j+1}}^{(k)}-X_{t_j}^{(k)}$.
At each $j<L$, fit $u_M(x)=M(x^\top,1)^\top$ by minimizing
\[
\mathcal J_{N,j}^{\mathrm{aff}}(M)
=\frac1N\sum_{k=1}^N
\left[
\|u_M(X_{t_j}^{(k)})\|^2
-2u_M(X_{t_j}^{(k)})\cdot\frac{W_j^{(k)}}h
-\frac{W_j^{(k)}W_j^{(k)\top}}h:\nabla_xu_M
\right].
\]
Write $v_{N,\mathrm{aff}}(\cdot,t_j)$ and
$\widehat v_{\mathrm{aff},h}(\cdot,t_j)$ for the empirical and
population affine minimizers. Gaussian integration by parts shows
that the latter is the $L^2(\rho_{t_j})$ projection of $\widehat v$
onto the affine class. It need not equal $\widehat v$: the raw
increment second moment is quadratic in $x$, so the unrestricted
PhiBE field can contain cubic terms. The compact set below is an
evaluation region; the Gaussian samples are not truncated to it.

\begin{corollary}[Affine OU convergence at observation times]
\label{cor:ou_finite_sample}
In the preceding setting, fix a compact set $\Omega\subset\mathbb R^d$
and put $p_{\mathrm{aff}}=d(d+1)$ and
$\ell_\delta=\log(2p_{\mathrm{aff}}L/\delta)$.
There exist constants $c_{\mathrm{OU}},C_{\Omega,\mathrm{OU}}>0$,
depending only on the OU parameters, $T,d,\Omega$, such that, if
$\delta\in(0,1)$ and
$N\ge c_{\mathrm{OU}}p_{\mathrm{aff}}\ell_\delta$, then, with
probability at least $1-\delta$,
\[
\max_{0\le j<L}\sup_{x\in\Omega}
\|v_{N,\mathrm{aff}}(x,t_j)
-\widehat v_{\mathrm{aff},h}(x,t_j)\|
\le C_{\Omega,\mathrm{OU}}
\sqrt{\frac{p_{\mathrm{aff}}\ell_\delta}{Nh}}.
\]
\end{corollary}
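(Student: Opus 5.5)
The plan is to treat each observation time $t_j$ separately, exactly as in the proof of Theorem~\ref{thm:general_finite_sample}, but with $N$ i.i.d.\ samples $(X_{t_j}^{(k)},W_j^{(k)})$. A union bound over the $L$ times then accounts for the factor $L$ inside $\ell_\delta$.

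\textbf{Step 1: reduce to a per-time linear problem.} Fix $j$ and use the affine basis. The affine class is linear in $M\in\mathbb R^{d\times(d+1)}$, with $p_{\mathrm{aff}}=d(d+1)$ parameters. The feature is $\phi(x)=(x^\top,1)^\top$, so the Gram matrix is $H_j=I_d\otimes\mathbb E[\phi\phi^\top]$ under $\rho_{t_j}=\mathcal N(m_{t_j},C_{t_j})$. Since $C_t$ is continuous and $C_0\succ0$, we have $\inf_{t\le T}\lambda_{\min}(C_t)>0$, and $m_t,C_t$ are bounded on $[0,T]$. Hence $H_j$ has eigenvalues in a fixed interval $[\lambda_-,\lambda_+]\subset(0,\infty)$ uniformly in $j$ and $h$. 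The population affine minimizer $\theta_{\star,j}=H_j^{-1}c_j$ is well defined, and it is the projection $\widehat v_{\mathrm{aff},h}(\cdot,t_j)$ noted before the corollary. Define $G$ and $\eta$ per time as in Assumption~\ref{ass:finite_sample_class}, with $\theta_\star$ replaced by this projection, as in the non-realizable remark.

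\textbf{Step 2: verify the $\psi_1$ conditions explicitly for the OU process.} For $a^\top(G-I)a$, the entries of $\phi\phi^\top$ are products of two Gaussian coordinates with bounded mean and covariance, so they are sub-exponential with norm bounded by a constant $K_G$ depending on the OU parameters and $T$. For $\eta$, the key point is the scaling $1/\sqrt h$. Write $W_j=M_hX_{t_j}+\xi_h$ with $\xi_h\sim\mathcal N(0,Q_h)$, $\|M_h\|=O(h)$ and $\|Q_h\|=O(h)$. Then
\[
\frac{W_j}{h}=\widehat A X_{t_j}+\frac{\xi_h}{h},
\qquad
\frac{W_jW_j^\top}{h}=\frac{\xi_h\xi_h^\top}{h}+O(h)\cdot(\text{polynomial in }X_{t_j},\xi_h).
\]
The term $\phi\cdot\xi_h/h$ is a product of a Gaussian with a Gaussian of standard deviation $O(h^{-1/2})$, so its $\psi_1$ norm is $O(h^{-1/2})$. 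The term $\xi_h\xi_h^\top/h$ has $\psi_1$ norm $O(1)$, and the remaining terms have $\psi_1$ norm $O(1)$ by Gaussian polynomial bounds. Subtracting the mean-zero-centering term $H_\Xi\theta_\star$ only uses $\|\theta_{\star,j}\|$. This is bounded uniformly in $h\le1$, because $\|c_j\|$ is bounded and $H_j\succeq\lambda_-$. Together these give $\|(\eta)_\ell\|_{\psi_1}\le K_R/\sqrt h$. I expect this bookkeeping to be the main obstacle: the constants must be uniform in $j$ and $h$, and the cross terms in $WW^\top$ must be handled through Gaussian moment bounds, using that products of sub-Gaussians are sub-exponential.

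\textbf{Step 3: concentration and union bound.} Run the proof of Theorem~\ref{thm:general_finite_sample} at each $j$ with confidence $\delta/L$. This gives, on an event of probability at least $1-\delta$ once $N\ge c_{\mathrm{OU}}p_{\mathrm{aff}}\ell_\delta$,
\[
\|H_j^{1/2}(\theta_{N,j}-\theta_{\star,j})\|\le C\sqrt{\frac{p_{\mathrm{aff}}\ell_\delta}{Nh}}
\qquad\text{for all } j.
\]

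\textbf{Step 4: pass from $L^2$ to a sup bound on $\Omega$.} Since $v_{N,\mathrm{aff}}-\widehat v_{\mathrm{aff},h}=(\Delta M)\phi(x)$, we have
\[
\sup_{x\in\Omega}\|(\Delta M)\phi(x)\|\le \|\Delta M\|_F\,\sup_{x\in\Omega}\|\phi(x)\|\le \lambda_-^{-1/2}(1+R)\,\|H_j^{1/2}\mathrm{vec}(\Delta M)\|.
\]
This yields the constant $C_{\Omega,\mathrm{OU}}$ and completes the argument.
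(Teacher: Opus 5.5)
Your proposal is correct and follows essentially the paper's argument. The common steps are: per-time normal equations; the $C/\sqrt h$ sub-exponential bound coming from the $\sqrt h$ sub-Gaussian increments (the $\xi_j/h$ cross term vanishes in mean because $\xi_j$ is independent of $X_{t_j}$, which keeps $c_j$ and hence $\theta_{\star,j}$ bounded); Bernstein across the $N$ independent trajectories with a union bound over the $L$ times; a Gram lower bound uniform in $j$ and $h$; and multiplication by $\sup_{x\in\Omega}\|(x^\top,1)^\top\|$. For the Gram bound, positivity of every $C_t$ comes from $C_t\succeq e^{A_{\mathrm{OU}}t}C_0e^{A_{\mathrm{OU}}^\top t}$ and then compactness of $[0,T]$, not from continuity alone as your Step 1 suggests.

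The only difference is packaging. You route the concentration through the whitened $G,\eta$ machinery and $\epsilon$-net of Theorem~\ref{thm:general_finite_sample} via its non-realizable remark. The paper instead bounds $H_{N,j}-H_j$ and $D_{N,j}-D_j$ entrywise and uses $M_{N,j}-\widehat M_j=\bigl[D_{N,j}-D_j+\widehat M_j(H_j-H_{N,j})\bigr]H_{N,j}^{-1}$, whose bracket is exactly the unwhitened form of your $\overline\eta_N$.
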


\begin{proof}
Put $q=d+1$, $z_j=(X_{t_j}^\top,1)^\top$, and $w_j=W_j$.
Define
\[
\begin{aligned}
H_{N,j}&=\frac1N\sum_{k=1}^N z_j^{(k)}z_j^{(k)\top},\\
D_{N,j}&=\frac1N\sum_{k=1}^N
\left[\frac{w_j^{(k)}z_j^{(k)\top}}h
+\left(\frac{w_j^{(k)}w_j^{(k)\top}}{2h},0\right)\right],
\end{aligned}
\]
where $(B,0)$ appends a zero column. Let $H_j=\mathbb EH_{N,j}$
and $D_j=\mathbb ED_{N,j}$. The objective is
$\operatorname{tr}(MH_{N,j}M^\top)-2\langle M,D_{N,j}\rangle_F$,
so its normal equations and their population counterparts are
\[
M_{N,j}H_{N,j}=D_{N,j},\qquad \widehat M_jH_j=D_j.
\]
The Gaussian marginal means and covariances are bounded on $[0,T]$.
Since $C_0\succ0$, their covariances are also uniformly positive
definite, and hence $\inf_{j,h}\lambda_{\min}(H_j)=:\lambda_*>0$.
For $N\ge q$, the augmented Gaussian design has full row rank almost
surely at all observation times; thus the empirical minimizers exist
and are unique. Choose $c_{\mathrm{OU}}$ to ensure $N\ge q$.

The exact transition representation is
\[
w_j=F_h(X_{t_j}-m)+\xi_j,\qquad F_h=e^{A_{\mathrm{OU}}h}-I,
\]
\[
\xi_j\sim\mathcal N\!\left(0,
\int_0^h e^{A_{\mathrm{OU}}s}SS^\top
e^{A_{\mathrm{OU}}^\top s}\,ds\right),
\]
where $\xi_j$ is independent of $X_{t_j}$. Consequently,
$\|F_h\|\le Ch$, the coordinates of $z_j$ have uniformly bounded
sub-Gaussian norms, and those of $w_j$ have sub-Gaussian norms at
most $C\sqrt h$. Products of sub-Gaussian variables are
sub-exponential even when the factors are dependent. Thus the
centered entries of $z_jz_j^\top$ have $\psi_1$-norm at most $C$,
and those of
$w_jz_j^\top/h+(w_jw_j^\top/(2h),0)$ have $\psi_1$-norm at most
$C/\sqrt h$. Here constants are independent of $j,N,h,L,\delta$.

Apply scalar Bernstein inequalities across the $N$ independent
trajectories and take a union bound over entries and times. With
probability at least $1-\delta$, simultaneously,
\[
\begin{aligned}
\max_{j<L}\|H_{N,j}-H_j\|_{\mathrm{op}}
&\le Cq\left(\sqrt{\frac{\ell_\delta}{N}}
+\frac{\ell_\delta}{N}\right),\\
\max_{j<L}\|D_{N,j}-D_j\|_F
&\le C\sqrt{\frac{p_{\mathrm{aff}}}{h}}
\left(\sqrt{\frac{\ell_\delta}{N}}
+\frac{\ell_\delta}{N}\right).
\end{aligned}
\]
Since $q^2\le2p_{\mathrm{aff}}$, the sample-size condition ensures
$H_{N,j}\succeq(\lambda_*/2)I_q$ for every $j<L$.
The transition representation also gives
$\sup_{j,h}\|D_j\|_F<\infty$: the term involving
$\mathbb E[\xi_jz_j^\top]$ vanishes, $F_h/h$ is bounded, and
$\mathbb E[w_jw_j^\top]/h$ is bounded. Therefore
$\widehat M_j=D_jH_j^{-1}$ is uniformly bounded.
Subtracting the normal equations yields
\[
M_{N,j}-\widehat M_j
=\bigl[D_{N,j}-D_j+\widehat M_j(H_j-H_{N,j})\bigr]H_{N,j}^{-1}.
\]
The preceding bounds, $h\le1$, and $q\le\sqrt{2p_{\mathrm{aff}}}$
give
\[
\max_{j<L}\|M_{N,j}-\widehat M_j\|_F
\le C\sqrt{\frac{p_{\mathrm{aff}}\ell_\delta}{Nh}}.
\]
Multiplication by
$\sup_{x\in\Omega}\|(x^\top,1)^\top\|<\infty$ proves the claim.
No independence between transitions within a trajectory is used.
\end{proof}

\subsection{Proof of the Overall Convergence}
\label{app:proof_overall_convergence}

\begin{proof}[Proof of Corollary~\ref{cor:overall_convergence}]
Restriction to $\Omega$ and the triangle inequality give
\[
\|v_N-v\|_{L^2(\mu_L;\Omega)}
\le\|v_N-\widehat v\|_{L^2(\mu_L)}
+\|\widehat v-v\|_{L^2(\mu_L;\Omega)}.
\]
Theorem~\ref{thm:general_finite_sample} bounds the first term.
Since $L^{-1}\sum_{j=0}^{L-1}\int_\Omega\rho_{t_j}(x)\,dx\le1$,
the second is at most
$\|\widehat v-v\|_{\infty,\Omega\times[0,T]}$ and is bounded by
Theorem~\ref{thm:general_population}. This proves the result.
\end{proof}

\section{Experimental Settings and More Results}
\label{app:B}
\subsection{Ornstein--Uhlenbeck Process}
\label{app:ou}

\paragraph{Dynamics and analytical solution.}
We consider the two-dimensional Ornstein--Uhlenbeck process
\begin{equation}
    dX_t
    =
    -\Gamma(X_t-\mu)\,dt
    +
    {\sigma}\,dB_t,
    \qquad
    X_0\sim\mathcal{N}(\mu_0,\Sigma_0),
    \label{eq:ou_sde_appendix}
\end{equation}
where $\Gamma\in\mathbb{R}^{2\times 2}, \; \sigma\in\mathbb{R}^{2\times 2}$, $\mu\in\mathbb{R}^2.$   We first establish a general result for the marginal distributions
of such OU process with Gaussian initial.
\begin{theorem}[Gaussian marginals of the OU process]
\label{thm:ou_gaussian_marginals}
Consider the Ornstein--Uhlenbeck process
\begin{equation}
    dX_t=\Gamma(X_t-\mu)\,dt+\sigma\,dW_t,
    \qquad
    X_0\sim\mathcal{N}(\mu_0,\Sigma_0),
    \label{eq:ou_general_sde}
\end{equation}
where $\Gamma\in\mathbb{R}^{d\times d}$ has eigenvalues with
strictly negative real parts,
$\sigma\in\mathbb{R}^{d\times r}$ is constant,
$\sigma\sigma^\top\succ0$, and $\Sigma_0\succ0$.
Assume that $X_0$ is independent of the Brownian motion $W$.

Then, for every $t\geq0$, the marginal distribution of $X_t$
is Gaussian, $\mathcal{N}(\mu_t,\Sigma_t)$, with
\begin{align}
    \mu_t
    &=
    \mu+\exp(\Gamma t)(\mu_0-\mu),
    \label{eq:ou_mean_theorem}
    \\
    \Sigma_t
    &=
    \exp(\Gamma t)(\Sigma_0-B)\exp(\Gamma^\top t)+B,
    \label{eq:ou_covariance_theorem}
\end{align}
where $B$ is the unique symmetric positive-definite solution
of the Lyapunov equation
\begin{equation}\label{sylvester_equ}
    \Gamma B+B\Gamma^\top=-\sigma\sigma^\top.
\end{equation}
In particular,
\[
    \mu_t\longrightarrow\mu,
    \qquad
    \Sigma_t\longrightarrow B,
    \qquad
    t\longrightarrow\infty,
\]
and the process has the unique stationary distribution
$\mathcal{N}(\mu,B)$.
\end{theorem}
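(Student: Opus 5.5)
The plan is to solve the linear SDE explicitly by variation of constants. Setting $Y_t=X_t-\mu$ gives $dY_t=\Gamma Y_t\,dt+\sigma\,dW_t$. Applying It\^o's formula to $e^{-\Gamma t}Y_t$ then yields
\[
X_t=\mu+e^{\Gamma t}(X_0-\mu)+\int_0^t e^{\Gamma(t-s)}\sigma\,dW_s .
\]
The stochastic integral has a deterministic integrand, so it is a centered Gaussian vector with covariance $\int_0^t e^{\Gamma u}\sigma\sigma^\top e^{\Gamma^\top u}\,du$ by the It\^o isometry. It is also independent of $X_0$. Hence $X_t$ is an affine image of the jointly Gaussian pair formed by $X_0$ and the integral, and so it is Gaussian. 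Its mean is $\mu+e^{\Gamma t}(\mu_0-\mu)$, and its covariance is
\[
\Sigma_t=e^{\Gamma t}\Sigma_0e^{\Gamma^\top t}+\int_0^t e^{\Gamma u}\sigma\sigma^\top e^{\Gamma^\top u}\,du .
\]

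Next, I will handle the Lyapunov equation. Since $\Gamma$ is Hurwitz, $\|e^{\Gamma u}\|\le Ce^{-\alpha u}$ for some $\alpha>0$, so $B:=\int_0^\infty e^{\Gamma u}\sigma\sigma^\top e^{\Gamma^\top u}\,du$ converges. It is symmetric, and it is positive definite because $\sigma\sigma^\top\succ0$ makes the integrand positive definite near $u=0$. Differentiating $u\mapsto e^{\Gamma u}\sigma\sigma^\top e^{\Gamma^\top u}$ and integrating over $[0,\infty)$ gives $\Gamma B+B\Gamma^\top=[e^{\Gamma u}\sigma\sigma^\top e^{\Gamma^\top u}]_0^\infty=-\sigma\sigma^\top$.

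For uniqueness, suppose $\Gamma B'+B'\Gamma^\top=-\sigma\sigma^\top$ as well. Then $D=B-B'$ satisfies $\Gamma D+D\Gamma^\top=0$, so $\frac{d}{du}\bigl(e^{\Gamma u}De^{\Gamma^\top u}\bigr)=0$. Hence $D=e^{\Gamma u}De^{\Gamma^\top u}\to0$ as $u\to\infty$, and $D=0$. Equivalently, the Sylvester operator $X\mapsto\Gamma X+X\Gamma^\top$ has eigenvalues $\lambda_i+\lambda_j\ne0$.

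The covariance formula follows by splitting $B=\int_0^t+\int_t^\infty$ and substituting $u=t+s$ in the tail, which gives $\int_t^\infty e^{\Gamma u}\sigma\sigma^\top e^{\Gamma^\top u}\,du=e^{\Gamma t}Be^{\Gamma^\top t}$. Therefore $\int_0^t e^{\Gamma u}\sigma\sigma^\top e^{\Gamma^\top u}\,du=B-e^{\Gamma t}Be^{\Gamma^\top t}$, and substituting this yields $\Sigma_t=e^{\Gamma t}(\Sigma_0-B)e^{\Gamma^\top t}+B$. The limits $\mu_t\to\mu$ and $\Sigma_t\to B$ follow from the exponential decay of $e^{\Gamma t}$.

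It remains to show that $\mathcal N(\mu,B)$ is the unique stationary law. Stationarity follows by taking $\mu_0=\mu$ and $\Sigma_0=B$ in the formulas, which give $\mu_t=\mu$ and $\Sigma_t=B$ for all $t$.

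For uniqueness, the main subtlety is that a stationary law need not a priori be Gaussian, or even have finite second moments. To avoid moment assumptions, I will use characteristic functions. If $X_0\sim\pi$ is stationary and independent of $W$, the solution formula gives
\[
\widehat\pi(\xi)=\widehat\pi\bigl(e^{\Gamma^\top t}\xi\bigr)\,e^{i\xi^\top(I-e^{\Gamma t})\mu-\frac12\xi^\top(B-e^{\Gamma t}Be^{\Gamma^\top t})\xi}.
\]
Letting $t\to\infty$, continuity of $\widehat\pi$ at $0$ with $\widehat\pi(0)=1$ yields $\widehat\pi(\xi)=e^{i\xi^\top\mu-\frac12\xi^\top B\xi}$, so $\pi=\mathcal N(\mu,B)$. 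This is the step I expect to need the most care. The rest is routine linear SDE calculus.
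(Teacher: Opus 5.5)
Your proposal is correct and follows the paper's route: the variation-of-constants solution, the It\^o isometry, $B=\int_0^\infty e^{\Gamma u}\sigma\sigma^\top e^{\Gamma^\top u}\,du$, and the identity $\int_0^t e^{\Gamma u}\sigma\sigma^\top e^{\Gamma^\top u}\,du=B-e^{\Gamma t}Be^{\Gamma^\top t}$. The paper obtains that identity by differentiating $e^{\Gamma s}Be^{\Gamma^\top s}$ with the Lyapunov equation rather than by your tail substitution, and it only asserts what you prove: that $B$ is the unique Lyapunov solution and that $\mathcal N(\mu,B)$ is the unique stationary law among all initial laws (your characteristic-function argument), so your version is the more complete one.
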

We defer the proof to the end of Sec. \ref{app:ou}.

\textbf{Reversible case:} We set
\[
    \Gamma=\operatorname{diag}(1,4),
    \qquad
    \mu=(4,4)^\top,
    \qquad
    {\sigma}=\frac{1}{\sqrt{2}}I_2, 
    \qquad 
    \mu_0=(0,0)^\top,
    \qquad
    \Sigma_0=I_2.
\]

By Theorem \ref{thm:ou_gaussian_marginals}, the marginal distribution remains Gaussian,
\[
    \rho_t=\mathcal{N}(\mu_t,\Sigma_t),
\]
where
\begin{equation}
    \mu_t
    =
    \begin{bmatrix}
        4(1-e^{-t})\\
        4(1-e^{-4t})
    \end{bmatrix},
    \qquad
    \Sigma_t
    =
    \begin{bmatrix}
        e^{-2t}+\frac{1-e^{-2t}}{4} & 0\\
        0 &
        e^{-8t}+\frac{1-e^{-8t}}{16}
    \end{bmatrix}.
    \label{eq:ou_exact_marginal}
\end{equation}

Since
\[
    \nabla_x\log\rho_t(x)
    =
    -\Sigma_t^{-1}(x-\mu_t),
\]
the exact probability velocity under the SDE convention of
\eqref{eq:ou_sde_appendix} is
\begin{equation}
    v^\star(x,t)
    =
    -\Gamma(x-\mu)
    +
    \frac{1}{4}\Sigma_t^{-1}(x-\mu_t).
    \label{eq:ou_exact_velocity}
\end{equation}
We refer readers to \cite{pavliotis2014stochastic} for detailed derivation of the above formulation. Equation~\ref{eq:ou_exact_velocity} is used only for evaluation and is
not provided to the neural PhiBE-Flow model during training.

\textbf{Non-reversible case:}  We retain the values of $\mu$, $\mu_0$, and $\Sigma_0$
from the reversible case and choose
\[
    \Gamma=
    \begin{bmatrix}
        -2 & -1 \\
        -1 & -2
    \end{bmatrix},
    \qquad
    \sigma=
    \begin{bmatrix}
        1 & 0 \\
        0 & \frac{1}{\sqrt{2}}
    \end{bmatrix}.
\]
The eigenvalues of $\Gamma$ are $-1$ and $-3$, so the
stability condition in
Theorem~\ref{thm:ou_gaussian_marginals} is satisfied.

By Theorem~\ref{thm:ou_gaussian_marginals}, the marginal
density is Gaussian, and its score is
\[
    \nabla_x\log\rho(x,t)
    =-\Sigma_t^{-1}(x-\mu_t).
\]
The probability velocity associated with
\eqref{eq:ou_general_sde} is therefore
\begin{align}
    v(x,t)
    &=
    \Gamma(x-\mu)
    -\frac{1}{2}\sigma\sigma^\top\nabla_x\log\rho(x,t)
    \notag\\
    &=
    \Gamma(x-\mu)
    +\frac{1}{2}\sigma\sigma^\top
    \Sigma_t^{-1}(x-\mu_t),
    \label{eq:ou_nonreversible_velocity}
\end{align}
where $\mu_t$ and $\Sigma_t$ are given by
\eqref{eq:ou_mean_theorem} and
\eqref{eq:ou_covariance_theorem}, respectively.
The corresponding deterministic probability flow satisfies
\[
    \dot{x}_t=v(x_t,t).
\]

Substituting the expression for $\mu_t$ into
\eqref{eq:ou_nonreversible_velocity}, we obtain
\begin{align*}
    v(x,t)
    &=
    \left(
        \Gamma+\frac{1}{2}\sigma\sigma^\top\Sigma_t^{-1}
    \right)(x-\mu)
    +\frac{1}{2}\sigma\sigma^\top
    \Sigma_t^{-1}(\mu-\mu_t)
    \\
    &=
    \Gamma(x-\mu)
    +\frac{1}{2}\sigma\sigma^\top\Sigma_t^{-1}
    \left(
        x-\mu+\exp(\Gamma t)(\mu-\mu_0)
    \right).
\end{align*}

As $t\rightarrow\infty$, the marginal density converges to
\[
    \rho_\infty(x)
    =
    \frac{1}{\sqrt{(2\pi)^d\det(B)}}
    \exp\left(
        -\frac{1}{2}(x-\mu)^\top B^{-1}(x-\mu)
    \right),
\]
and the probability velocity converges to
\[
    v_\infty(x)=\mathscr{B}(x-\mu),
\]
where
\begin{equation}\label{limit B}
    \mathscr{B}
    =
    \Gamma+\frac{1}{2}\sigma\sigma^\top B^{-1}
    =
    \frac{1}{2}
    \left(
        \Gamma-B\Gamma^\top B^{-1}
    \right).
\end{equation}
The second equality follows from \eqref{sylvester_equ}.

The limiting velocity matrix satisfies
\[
    \mathscr{B}B+B\mathscr{B}^\top=0,
\]
or, equivalently,
\[
    \mathscr{B}^\top=-B^{-1}\mathscr{B}B.
\]
Hence, the transformed matrix
\[
    \widetilde{\mathscr{B}}
    =B^{-1/2}\mathscr{B}B^{1/2}
\]
is antisymmetric. The limiting probability flow therefore
preserves the quadratic form
\[
    \frac{\mathrm{d}}{\mathrm{d}t}
    \left(
        (x_t-\mu)^\top B^{-1}(x_t-\mu)
    \right)=0,
\]
and leaves the stationary Gaussian distribution invariant.
Thus, a stationary marginal distribution can coexist with
persistent probability circulation.

For the matrices $\Gamma$ and $\sigma$ specified above,
solving \eqref{sylvester_equ} yields
\[
    B=
    \frac{1}{16}
    \begin{bmatrix}
        5 & -2 \\
        -2 & 3
    \end{bmatrix},
    \qquad
    B^{-1}=
    \frac{16}{11}
    \begin{bmatrix}
        3 & 2 \\
        2 & 5
    \end{bmatrix}.
\]
Substituting these expressions into \eqref{limit B} gives
\begin{align*}
    \mathscr{B}
    &=
    \frac{1}{2}
    \left(
        \Gamma-B\Gamma^\top B^{-1}
    \right)
    \\
    &=
    \frac{1}{11}
    \begin{bmatrix}
        2 & 5 \\
        -3 & -2
    \end{bmatrix}.
\end{align*}
Consequently, the limiting probability-flow dynamics are
\[
    \dot{x}_t=\mathscr{B}(x_t-\mu).
\]
The nonconstant trajectories follow closed elliptic orbits
centered at $\mu$, with orthogonal principal-axis directions
$(\sqrt{5}-1,2)^\top$ and $(-(\sqrt{5}+1),2)^\top$.
Moreover,
\[
    \mathscr{B}^2=-\frac{1}{11}I,
\]
so the trajectories are periodic with angular frequency
$\omega=1/\sqrt{11}$ and period $2\pi\sqrt{11}$.

\paragraph{Trajectory generation.}
We generate independent sample trajectories from
\eqref{eq:ou_sde_appendix} and record the states at discrete physical-time
nodes
\[
    0=t_0<t_1<\cdots<t_{N_t-1}=T.
\]
The resulting training set consists of adjacent transition pairs
\[
    \left\{
        \left(
            X_{t_i}^{(k)},
            X_{t_{i+1}}^{(k)}
        \right)
    \right\}_{k=1,\ldots,N_x;\;
              i=0,\ldots,N_t-2}.
\]
For the neural velocity-field experiment, we use
\[
    T=1,\qquad
    N_x=50{,}000,\qquad
    N_t=2{,}000.
\]

\paragraph{Affine velocity field and optimization.}
As discussed in Sec.~\ref{sec:ou}, for the non-reversible
Ornstein--Uhlenbeck (OU) process, we seek an affine vector field
$v_{\theta_i}:\mathbb{R}^d \to \mathbb{R}^d$ at each time step $t_i$
that approximates
\[
\mathbf{b}(\cdot)
- \frac 1 2\bigl(\nabla \cdot \boldsymbol{\Sigma}(\cdot)\bigr)^\top
- \frac 1 2\boldsymbol{\Sigma}(\cdot)\nabla \log \rho_{t_i}(\cdot).
\]
For convenience, let $N=N_{\mathbf{x}}$,
$x_k=x_{t_i}^{(k)}$,
$\Delta x_k=x_{t_{i+1}}^{(k)}-x_{t_i}^{(k)}$, and
$\Delta t_i=t_{i+1}-t_i$.
The corresponding empirical least-squares problem is
\begin{equation}
\min_{\theta_i \in \mathbb{R}^m}
\frac{1}{N}\sum_{k=1}^{N}
\left[
    \lVert v_{\theta_i}(x_k) \rVert^2
    - 2v_{\theta_i}(x_k)\cdot
      \frac{\Delta x_k}{\Delta t_i}
    - 2\nabla_x v_{\theta_i}(x_k):
      \frac{\Delta x_k\Delta x_k^\top}{2\Delta t_i}
\right].
\label{lst_seq_vf_affine_Ito}
\end{equation}

For a fixed time step, we suppress the time index in the parameters
and write
\[
v_\theta(x)=Ax+\mu,
\qquad
A\in\mathbb{R}^{d\times d},
\quad
\mu\in\mathbb{R}^d,
\quad
\theta=(A,\mu).
\]
Introducing the augmented matrix and vector
\[
\widetilde{A}=[A\mid\mu]\in\mathbb{R}^{d\times(d+1)},
\qquad
\widetilde{x}=
\begin{bmatrix}
x\\
1
\end{bmatrix}
\in\mathbb{R}^{d+1},
\]
we have $v_\theta(x)=\widetilde{A}\widetilde{x}$.
Define the augmented sample matrix and the increment matrix as
\[
\widetilde{X}
=
[\widetilde{x}_1,\ldots,\widetilde{x}_N]
\in\mathbb{R}^{(d+1)\times N},
\qquad
\widetilde{x}_k=
\begin{bmatrix}
x_k\\
1
\end{bmatrix},
\qquad
\Delta X=[\Delta x_1,\ldots,\Delta x_N]
\in\mathbb{R}^{d\times N}.
\]
Let
\[
B=\frac{\Delta X}{\Delta t_i}\in\mathbb{R}^{d\times N},
\qquad
C=\frac{\Delta X\Delta X^\top}{2\Delta t_i}
\in\mathbb{R}^{d\times d},
\qquad
\widetilde{C}=
\begin{bmatrix}
C\\
0^\top
\end{bmatrix}
\in\mathbb{R}^{(d+1)\times d}.
\]
Since $\nabla_x v_\theta(x)=A$, the objective in
\eqref{lst_seq_vf_affine_Ito} can be written as
\[
\frac{1}{N}
\left[
    \lVert \widetilde{A}\widetilde{X} \rVert_F^2
    - 2\operatorname{Tr}
      \bigl(B^\top\widetilde{A}\widetilde{X}\bigr)
    - 2\operatorname{Tr}
      \bigl(\widetilde{A}\widetilde{C}\bigr)
\right].
\]
Thus, the minimization problem is equivalent to
\[
\min_{\widetilde{A}}
\operatorname{Tr}
\bigl(\widetilde{X}\widetilde{X}^\top
      \widetilde{A}^\top\widetilde{A}\bigr)
-2\operatorname{Tr}
\bigl(\widetilde{X}B^\top\widetilde{A}\bigr)
-2\operatorname{Tr}
\bigl(\widetilde{C}\widetilde{A}\bigr).
\]
Setting the gradient with respect to $\widetilde{A}$ to zero yields
the normal equations
\[
\widetilde{A}^{*}\widetilde{X}\widetilde{X}^\top
=
B\widetilde{X}^\top+\widetilde{C}^\top.
\]
Provided that these equations are consistent, the minimizer with
the smallest Frobenius norm is
\[
\widetilde{A}^{*}
=
\bigl(B\widetilde{X}^\top+\widetilde{C}^\top\bigr)
\bigl(\widetilde{X}\widetilde{X}^\top\bigr)^\dagger,
\]
where $\dagger$ denotes the Moore--Penrose pseudoinverse.
If $\widetilde{X}$ has full row rank, the minimizer is unique,
and the pseudoinverse reduces to the ordinary inverse.
The resulting optimal vector field at time $t_i$ is therefore
\[
v_{\theta_i^*}(x)=\widetilde{A}^{*}\widetilde{x},
\]
with coefficients computed directly from the observed data pairs.

We apply this optimization method for affine vector fields to both
reversible and non-reversible OU processes. Figure~\ref{fig:non_reversible_plot_mu_cov_vs_t} compares the time
evolution of selected mean and covariance components of the marginal
distributions obtained from the computed probability flows with
their exact counterparts. The numerical results show good agreement with the exact solutions in both cases.

\begin{figure}[htb!]
\centering
\begin{subfigure}{0.24\textwidth}
    \centering
    \includegraphics[width=0.8\linewidth]{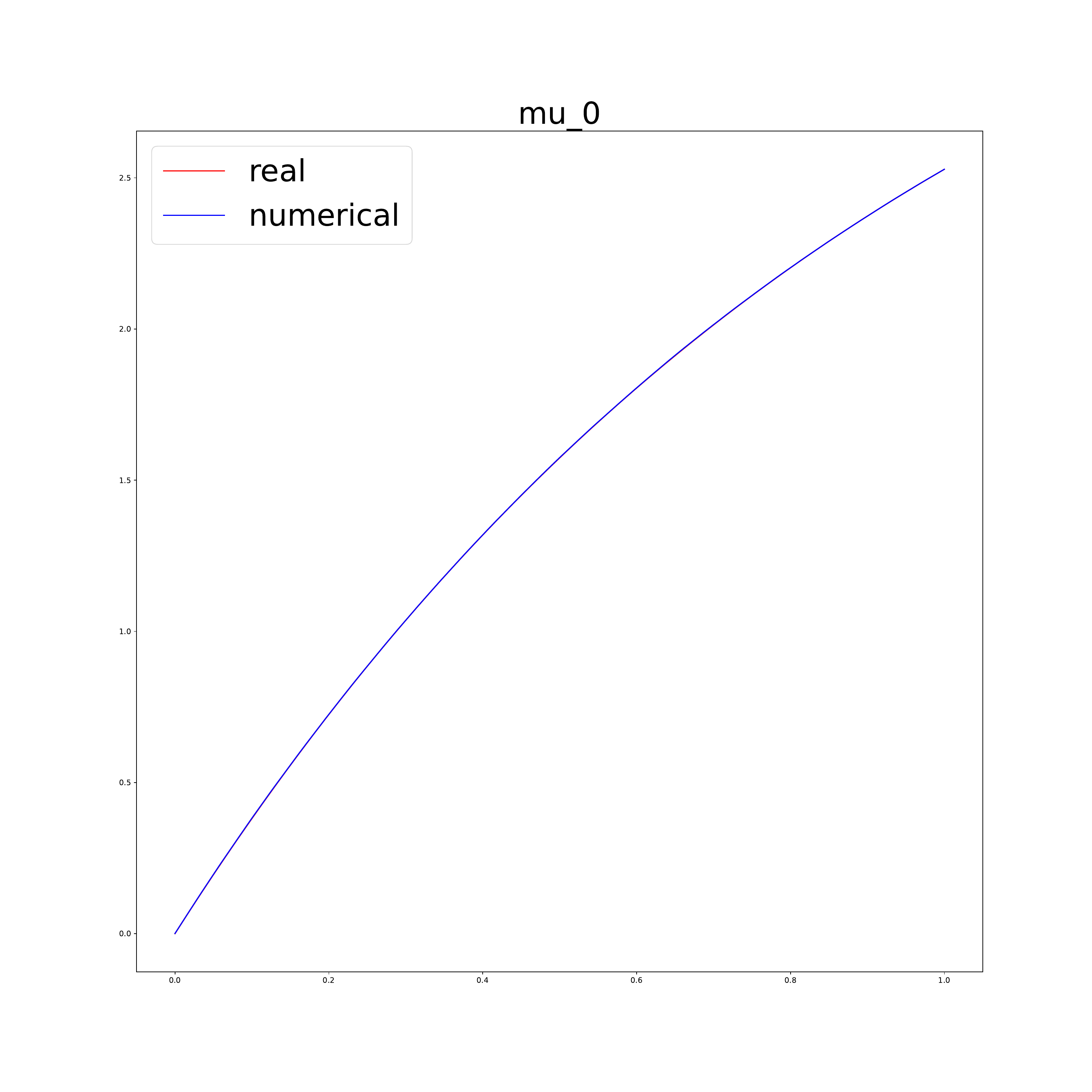}
    \caption{$\mu_{t,0}$ (Reversible)}
\end{subfigure}%
\begin{subfigure}{0.24\textwidth}
    \centering
    \includegraphics[width=0.8\linewidth]{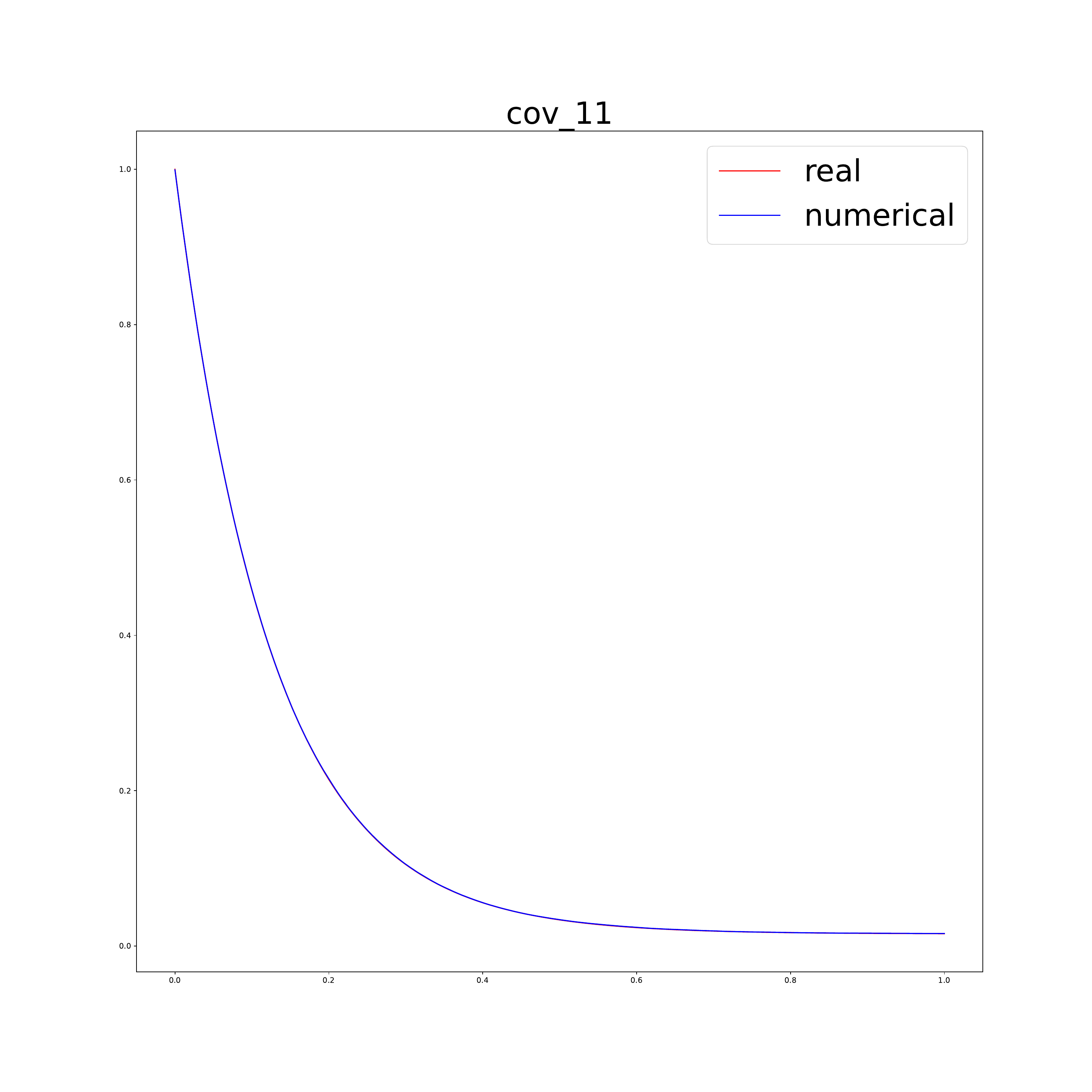}
    \caption{$\boldsymbol{\Sigma}_{t,11}$ (Reversible)}
\end{subfigure}%
\begin{subfigure}{0.24\textwidth}
    \centering
    \includegraphics[width=0.8\linewidth]{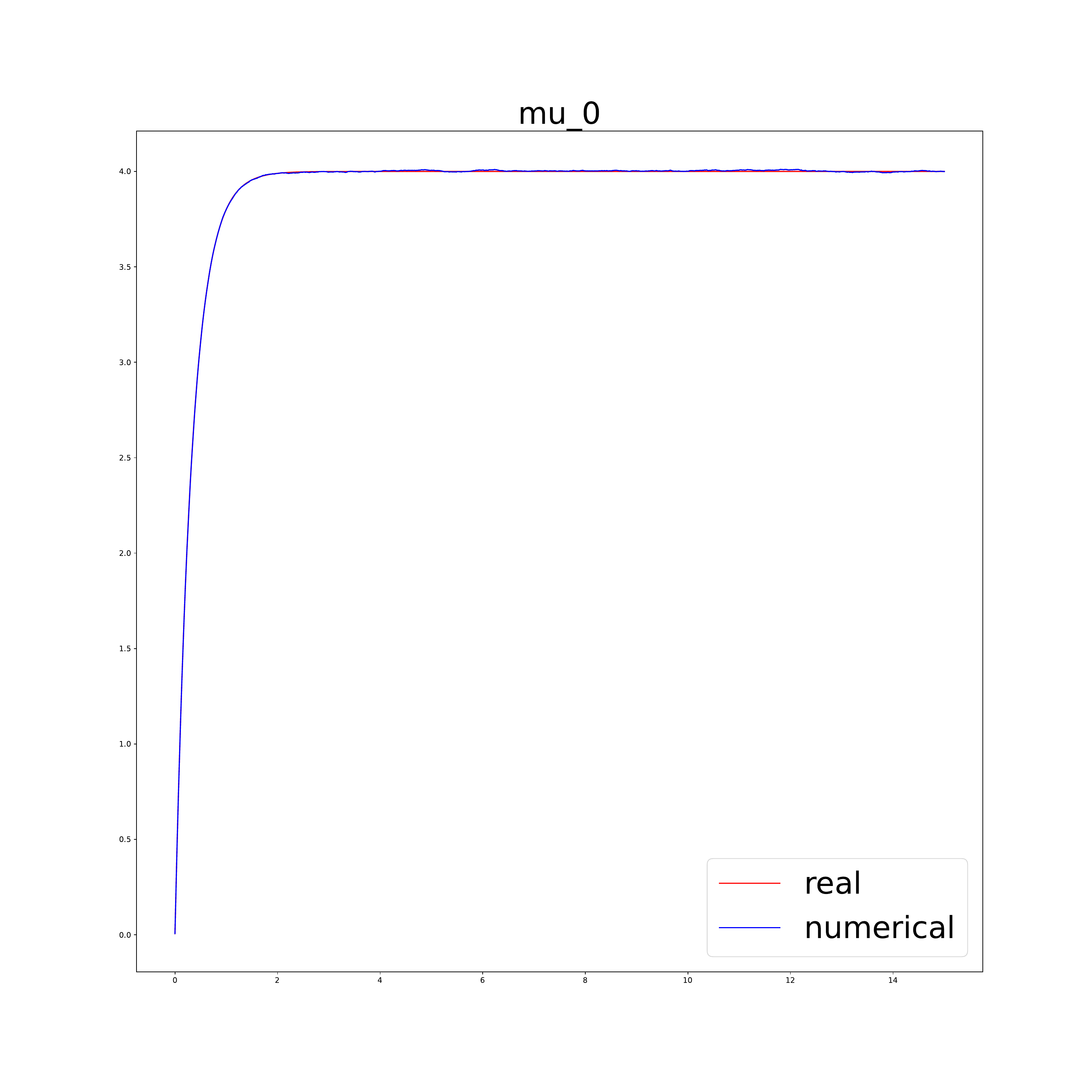}
    \caption{$\mu_{t,0}$ (Non-reversible)}
\end{subfigure}%
\begin{subfigure}{0.24\textwidth}
    \centering
    \includegraphics[width=0.8\linewidth]{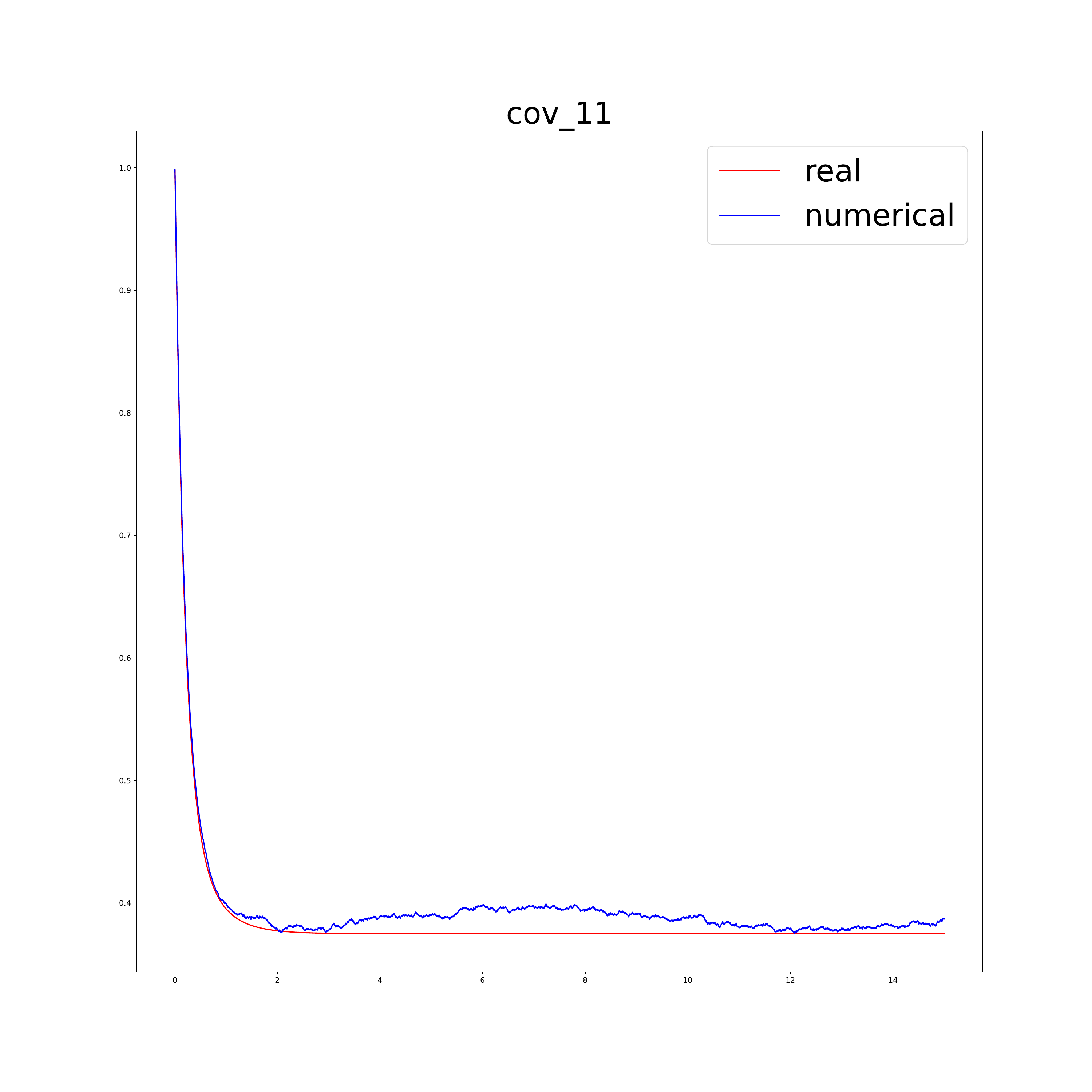}
    \caption{$\boldsymbol{\Sigma}_{t,11}$ (Non-reversible)}
\end{subfigure}%
\caption{
Time evolution of selected mean and covariance components of the
marginal distributions for reversible (left two panels) and
non-reversible (right two panels) OU processes.
Results obtained from the computed probability flows are compared
with the exact solutions.}

\label{fig:non_reversible_plot_mu_cov_vs_t}
\end{figure}

\paragraph{Velocity network and optimization.}
The probability velocity $v_\theta(x,t)$ is represented by an MLP with
five hidden layers and hidden width 50. The network takes the physical
state $x\in\mathbb{R}^2$ and physical time $t$ as input and outputs a
two-dimensional velocity vector. The model is optimized using the
empirical PhiBE-Flow objective described in Section~3.


After training, we generate probability-flow trajectories by numerically
integrating
\[
    \dot X_t=v_\theta(X_t,t).
\]


\paragraph{Velocity-field error.}
Because the exact probability velocity is available, we evaluate the
learned field using
\[
    \left\|
        v_\theta-v^\star
    \right\|_{L^2(\rho\otimes[0,T])},
\]
where the expectation over $\rho_t$ is approximated using independent
samples from the analytical OU marginal. We additionally report the
time-dependent velocity error
\[
    \left\|
        v_\theta(\cdot,t)-v^\star(\cdot,t)
    \right\|_{L^2(\rho_t)}.
\]

\paragraph{Dependence on temporal resolution and sample size.}
To study the effect of temporal resolution, we fix
$N_x=30{,}000$ and vary
\[
    N_t=10\times2^k,
    \qquad
    k=0,1,\ldots,8.
\]
To study finite-sample behavior, we fix $N_t=3{,}000$ and vary
\[
    N_x=100\times2^k,
    \qquad
    k=0,1,\ldots,9.
\]
For each setting, we measure the global
$L^2(\rho\otimes[0,T])$ error between the learned and exact probability
velocities.

\paragraph{Proof of Theorem \ref{thm:ou_gaussian_marginals}.}
\begin{proof}
The explicit solution of \eqref{eq:ou_general_sde} is
\[
    X_t
    =
    \mu+\exp(\Gamma t)(X_0-\mu)
    +\int_0^t\exp(\Gamma(t-s))\sigma\,dW_s.
\]
Since $X_0$ is Gaussian and independent of the Gaussian
stochastic integral, $X_t$ is Gaussian.
Taking expectations gives \eqref{eq:ou_mean_theorem},
while the Itô isometry yields
\begin{equation}
    \Sigma_t
    =
    \exp(\Gamma t)\Sigma_0\exp(\Gamma^\top t)
    +\int_0^t
    \exp(\Gamma s)\sigma\sigma^\top
    \exp(\Gamma^\top s)\,ds.
\end{equation}

Because $\Gamma$ is stable, the integral
\[
    B
    =
    \int_0^\infty
    \exp(\Gamma s)\sigma\sigma^\top
    \exp(\Gamma^\top s)\,ds
\]
converges and defines the unique symmetric positive-definite
solution of \eqref{sylvester_equ}. Moreover,
\[
    \frac{\mathrm{d}}{\mathrm{d}s}
    \left(
        \exp(\Gamma s)B\exp(\Gamma^\top s)
    \right)
    =
    -\exp(\Gamma s)\sigma\sigma^\top\exp(\Gamma^\top s).
\]
Integrating this identity gives
\[
    \int_0^t
    \exp(\Gamma s)\sigma\sigma^\top
    \exp(\Gamma^\top s)\,ds
    =
    B-\exp(\Gamma t)B\exp(\Gamma^\top t).
\]
Substitution into the covariance formula establishes
\eqref{eq:ou_covariance_theorem}.
Finally, $\exp(\Gamma t)\rightarrow0$ as $t\rightarrow\infty$,
which gives the stated limits and the stationary distribution.
\end{proof}

\subsection{Double Pendulum System}
\label{app:db}

We use the standard Acrobot configuration with
$m_1=m_2=1$, $l_1=l_2=1$, $l_{c1}=l_{c2}=0.5$,
$I_1=I_2=1$, and gravitational acceleration $g=9.8$.
Stochasticity is introduced by adding independent Brownian perturbations
to the two angular accelerations.

For the state-space experiment, we generate 1000 trajectories, each
containing 240 observations corresponding to 8 seconds of simulation at
30 frames per second. Initial states are sampled uniformly from
\[
\Omega =
\left\{
(\theta_1,\theta_2,\dot{\theta}_1,\dot{\theta}_2):
-\pi \leq \theta_1,\theta_2 \leq \pi,\;
-0.1 \leq \dot{\theta}_1,\dot{\theta}_2 \leq 0.1
\right\}.
\]
The probability-velocity model is implemented as a residual MLP with
hidden dimension 256 and two residual blocks. We optimize the model
using Adam with learning rate $10^{-3}$ for 50 epochs. Results are
generated autoregressively using forward Euler integration.

\subsubsection{State-Space Implementation Details}
\label{app:db_state_impl}

The state-space model operates directly on
$x_t=(\theta_1,\theta_2,\dot{\theta}_1,\dot{\theta}_2)_t$.
Consecutive observations $(x_t,x_{t+1})$ are used to construct the
empirical drift and diffusion terms required by the training objective.
Algorithm~\ref{alg:state-time-input} summarizes training and autoregressive
generation.

\begin{algorithm}[t]
\caption{State-space training and autoregressive generation with time input}
\label{alg:state-time-input}
\begin{algorithmic}[1]
\Statex \textbf{Part I: Training}
\Require Trajectories $\mathcal D=\{(x_i^{(\ell)},t_i^{(\ell)})_{i=0}^{M-1}\}_{\ell=1}^{N_{\mathrm{traj}}}$ with spacing $\Delta t$; model $v_\omega(x,t)$; learning rate $\eta=10^{-3}$; epochs $E=50$
\For{$e=1,\ldots,E$}
  \For{each consecutive pair $(x_i,x_{i+1})$ and its timestamp $t_i$ in $\mathcal D$}
    \State $\delta x_i\gets x_{i+1}-x_i$
    \State $v\gets v_\omega(x_i,t_i)$
    \State $b\gets\delta x_i/\Delta t$, \quad $A\gets\delta x_i\delta x_i^\top/\Delta t$
    \State $J\gets\left.\nabla_x v_\omega(x,t_i)\right|_{x=x_i}$ \Comment{hold time fixed}
    \State $\mathcal L\gets\|v\|_2^2-2\langle b,v\rangle-\operatorname{Tr}(AJ)$
    \State $\omega\gets\operatorname{Adam}(\omega,\nabla_\omega\mathcal L,\eta)$
  \EndFor
\EndFor
\Statex \textbf{Part II: Autoregressive generation}
\Require Initial state $x_{\mathrm{in}}$ at time $t_{\mathrm{in}}$; trained $v_\omega$; number of forecast steps $H$; time step $\Delta t$
\State $\widehat x_0\gets x_{\mathrm{in}}$, \quad $\tau_0\gets t_{\mathrm{in}}$
\State $\mathcal X_{\mathrm{gen}}\gets[(\widehat x_0,\tau_0)]$
\For{$h=0,\ldots,H-1$}
  \State $v\gets v_\omega(\widehat x_h,\tau_h)$
  \State $\widehat x_{h+1}\gets\widehat x_h+\Delta t\,v$
  \State $\tau_{h+1}\gets\tau_h+\Delta t$
  \State Append $(\widehat x_{h+1},\tau_{h+1})$ to $\mathcal X_{\mathrm{gen}}$
\EndFor
\State \Return $\mathcal X_{\mathrm{gen}}$
\end{algorithmic}
\end{algorithm}

\subsubsection{Image Data Generation}
\label{app:db_image_data}

For the image-based experiment, each physical trajectory is converted
into a sequence of visual observations. Given the joint angles
$(\theta_1,\theta_2)$, the Cartesian coordinates of the first and second
joints are computed as
\[
x_1 = \sin(\theta_1),
\qquad
y_1 = -\cos(\theta_1),
\]
and
\[
x_2 = x_1 + \sin(\theta_1+\theta_2),
\qquad
y_2 = y_1 - \cos(\theta_1+\theta_2).
\]
Both links have unit length. The pendulum is rendered inside the
Cartesian domain $[-2,2]\times[-2,2]$, with rods and joints shown in
white against a black background. Each rendered frame is converted to
grayscale and resized to the input resolution of the encoder.

For representation learning and latent-space generation, we generate
1000 image trajectories, each containing 240 consecutive frames.

\subsubsection{Geometry-Preserving Encoder and Decoder}
\label{app:db_gpe}

We employ a geometry-preserving encoder $T_\phi$~\citep{lee2025geometry}
to map an image $x_t$ to a three-dimensional latent representation
$z_t=T_\phi(x_t)$ and a decoder $S_\psi$ to reconstruct the image from
the latent representation. During decoder training, Gaussian
perturbations are added to the latent variables to improve robustness
to errors accumulated during autoregressive prediction.

The encoder and decoder architectures used for the double-pendulum
experiment are summarized in Table~\ref{tab:db_gpe_arch}.

\begin{table}[H]
\centering
\caption{Architectures of the geometry-preserving encoder
$T_\phi$ and decoder $S_\psi$.}
\label{tab:db_gpe_arch}
\resizebox{\linewidth}{!}{
\begin{tabular}{@{}llc@{}}
\toprule
\textbf{Model} & \textbf{Layer} & \textbf{Output Shape} \\
\midrule

\textbf{Encoder}
& Input image $x$ (resized)
& $1\times32\times32$ \\

& Conv2d($1\rightarrow32$), LeakyReLU, BatchNorm
& $32\times16\times16$ \\

& Conv2d($32\rightarrow64$), LeakyReLU
& $64\times8\times8$ \\

& Conv2d($64\rightarrow128$), LeakyReLU, BatchNorm
& $128\times4\times4$ \\

& Flatten, Linear($2048\rightarrow3$)
& $3$ \\

\midrule

\textbf{Decoder}
& Input latent $z$
& $3\times1\times1$ \\

& ConvTranspose2d($3\rightarrow64$),
  Conv2d($64\rightarrow128$)
& $128\times4\times4$ \\

& ConvTranspose2d($128\rightarrow64$),
  Conv2d($64\rightarrow64$)
& $64\times8\times8$ \\

& ConvTranspose2d($64\rightarrow32$),
  Conv2d($32\rightarrow32$)
& $32\times16\times16$ \\

& ConvTranspose2d($32\rightarrow128$),
  Conv2d($128\rightarrow1$), Tanh
& $1\times32\times32$ \\

\bottomrule
\end{tabular}
}
\end{table}

\subsubsection{Encoder and Decoder Training}
\label{app:db_gpe_train}

The encoder is first trained to preserve pairwise geometry in the latent
space while encouraging temporally adjacent observations to remain
close. After the encoder is fixed, the decoder is trained to reconstruct
images from noisy latent representations. Algorithm~\ref{alg:db_gpe}
summarizes the procedure.

\begin{algorithm}[!ht]
\caption{Training the Geometry-Preserving Encoder and Decoder}
\label{alg:db_gpe}
\begin{algorithmic}[1]

\Require Dataset
$\mathcal{D}=\{X_1^l,X_2^l,\ldots,X_{240}^l\}_{l=1}^{1000}$,
encoder $T_\phi$, decoder $S_\psi$,
$\lambda_{\mathrm{smooth}}=25$,
noise scale $\sigma=0.3$,
learning rate $\eta=10^{-4}$,
batch size $B=100$

\Statex \textbf{Stage I: Encoder Training}

\For{$k=1,2,\ldots,50{,}000$}
    \State Sample
    $\mathcal{B}
    =\{(x_t^{(b)},x_{t+1}^{(b)})\}_{b=1}^{B}$

    \For{$b=1,2,\ldots,B$}
        \State $z_t^{(b)} \leftarrow T_\phi(x_t^{(b)})$
        \State $z_{t+1}^{(b)} \leftarrow T_\phi(x_{t+1}^{(b)})$
    \EndFor

    \State $\displaystyle
    \mathcal{L}_{\mathrm{GME}}^{(t)}
    \leftarrow
    \frac{1}{B^2}
    \sum_{i,j=1}^{B}
    \left[
    \log\!\left(1+\|z_t^{(i)}-z_t^{(j)}\|_2^2\right)
    -
    \log\!\left(1+\|x_t^{(i)}-x_t^{(j)}\|_2^2\right)
    \right]^2$

    \State $\displaystyle
    \mathcal{L}_{\mathrm{GME}}^{(t+1)}
    \leftarrow
    \frac{1}{B^2}
    \sum_{i,j=1}^{B}
    \left[
    \log\!\left(1+\|z_{t+1}^{(i)}-z_{t+1}^{(j)}\|_2^2\right)
    -
    \log\!\left(1+\|x_{t+1}^{(i)}-x_{t+1}^{(j)}\|_2^2\right)
    \right]^2$

    \State $\displaystyle
    \mathcal{L}_{\mathrm{GPE}}
    \leftarrow
    \frac{1}{2}
    \left(
    \mathcal{L}_{\mathrm{GME}}^{(t)}
    +
    \mathcal{L}_{\mathrm{GME}}^{(t+1)}
    \right)$

    \State $\displaystyle
    \mathcal{L}_{\mathrm{smooth}}
    \leftarrow
    \frac{1}{B}
    \sum_{b=1}^{B}
    \|z_{t+1}^{(b)}-z_t^{(b)}\|_2^2$

    \State $\mathcal{L}_{\mathrm{enc}}
    \leftarrow
    \mathcal{L}_{\mathrm{GPE}}
    +
    \lambda_{\mathrm{smooth}}
    \mathcal{L}_{\mathrm{smooth}}$

    \State $\phi \leftarrow
    \operatorname{Adam}
    \bigl(\phi,\nabla_\phi\mathcal{L}_{\mathrm{enc}},\eta\bigr)$
\EndFor

\vspace{0.15cm}
\Statex \textbf{Stage II: Decoder Training}

\For{$k=1,2,\ldots,100{,}000$}
    \State Sample
    $\mathcal{B}
    =\{(x_t^{(b)},x_{t+1}^{(b)})\}_{b=1}^{B}$

    \For{$b=1,2,\ldots,B$}
        \State $z_t^{(b)} \leftarrow T_\phi(x_t^{(b)})$
        \State $z_{t+1}^{(b)} \leftarrow T_\phi(x_{t+1}^{(b)})$

        \State $\epsilon_t^{(b)},\epsilon_{t+1}^{(b)}
        \sim\mathcal{N}(0,\sigma^2 I)$

        \State $\hat{x}_t^{(b)}
        \leftarrow
        S_\psi(z_t^{(b)}+\epsilon_t^{(b)})$

        \State $\hat{x}_{t+1}^{(b)}
        \leftarrow
        S_\psi(z_{t+1}^{(b)}+\epsilon_{t+1}^{(b)})$
    \EndFor

    \State $\displaystyle
    \mathcal{L}_{\mathrm{recon}}
    \leftarrow
    \frac{1}{2B}
    \sum_{b=1}^{B}
    \left(
    \|x_t^{(b)}-\hat{x}_t^{(b)}\|_2^2
    +
    \|x_{t+1}^{(b)}-\hat{x}_{t+1}^{(b)}\|_2^2
    \right)$

    \State $\psi \leftarrow
    \operatorname{Adam}
    \bigl(\psi,\nabla_\psi\mathcal{L}_{\mathrm{recon}},\eta\bigr)$
\EndFor

\end{algorithmic}
\end{algorithm}

\subsubsection{Latent-Space Dynamics}
\label{app:db_latent}

Once the image representation is learned, each encoded latent position
$z_t\in\mathbb{R}^3$ is augmented with its finite-difference velocity,
\[
\dot{z}_t
=
\frac{z_t-z_{t-1}}{\Delta t},
\]
yielding the six-dimensional dynamical state
\[
s_t =
\begin{bmatrix}
z_t \\
\dot{z}_t
\end{bmatrix}
\in\mathbb{R}^6.
\]

We use a neural network $v_\omega$ to learn the probability velocity in
this latent dynamical space. The specific neural architecture can be
chosen according to the dataset and is not essential to the proposed
training objective.

\subsubsection{Latent-Space Training and Prediction}
\label{app:db_latent_impl}

Algorithm~\ref{alg:latent-time-input} describes training and autoregressive
prediction in the learned latent space. The same probability-velocity
objective as in the explicit state-space experiment is applied to the
latent dynamical states.

\begin{algorithm}[t]
\caption{Latent-space training and single-state autoregressive prediction with time input}
\label{alg:latent-time-input}
\begin{algorithmic}[1]
\Statex \textbf{Part I: Training}
\Require Latent trajectories 
$\{(x_i^{(\ell)},t_i^{(\ell)})_{i=0}^{M-1}\}_{\ell=1}^{N_{\mathrm{traj}}}$;
model $v_\omega(x,t)$; learning rate $\eta=10^{-4}$;
time step $\Delta t$; epochs $E=50$

\For{$e=1,\ldots,E$}
  \For{each consecutive pair $(x_i,x_{i+1})$}
    \State $\delta x_i\gets x_{i+1}-x_i$
    \State $v\gets v_\omega(x_i,t_i)$
    \State $b\gets\delta x_i/\Delta t$,
    \quad $A\gets\delta x_i\delta x_i^\top/\Delta t$
    \State $J\gets\left.\nabla_x v_\omega(x,t_i)\right|_{x=x_i}$
    \State $\mathcal L\gets
    \|v\|_2^2-2\langle b,v\rangle-\operatorname{Tr}(AJ)$
    \State $\omega\gets
    \operatorname{Adam}(\omega,\nabla_\omega\mathcal L,\eta)$
  \EndFor
\EndFor

\Statex \textbf{Part II: Single-state autoregressive prediction}
\Require Initial latent state $\widehat x_0$ at time $t_{\mathrm{in}}$;
trained $v_\omega$; decoder $S_\psi$;
number of forecast steps $H$; time step $\Delta t$

\State $\tau_0\gets t_{\mathrm{in}}$,
\quad $\mathcal X_{\mathrm{pred}}\gets[\ ]$

\For{$h=0,\ldots,H-1$}
  \State $v\gets v_\omega(\widehat x_h,\tau_h)$
  \State $\widehat x_{h+1}
  \gets \widehat x_h+\Delta t\,v$
  \State $\tau_{h+1}\gets\tau_h+\Delta t$
  \State $\widehat y_{h+1}
  \gets S_\psi(\widehat x_{h+1})$
  \State Append $(\widehat y_{h+1},\tau_{h+1})$
  to $\mathcal X_{\mathrm{pred}}$
\EndFor

\State \Return $\mathcal X_{\mathrm{pred}}$
\end{algorithmic}
\end{algorithm}

\subsection{2D Navier--Stokes Equations Generation}
\label{app:nse}

\paragraph{Stochastic Navier--Stokes dataset.}We use the two-dimensional stochastic Navier--Stokes dataset from \citet{chen2024probabilistic}, consisting of vorticity fields on the periodic domain $[0,2\pi]^2$. The dynamics incorporate viscosity $\nu=10^{-3}$, linear damping $\alpha=0.1$, and white-in-time stochastic forcing with amplitude $\varepsilon=1$ acting on selected Fourier modes. The data were generated using a pseudo-spectral solver with Euler--Maruyama time stepping on a $256\times256$ grid, with integration step $\delta t=10^{-4}$. The dataset comprises 1,000 trajectories simulated over $t\in[0,100]$, with the initial transient $t\in[0,50]$ discarded. Vorticity snapshots were recorded every $\Delta t=0.5$, yielding $2\times10^5$ snapshots, and subsequently downsampled to $128\times128$ resolution.

\subsection{KTH Video Generation}
\label{app:kth}

\paragraph{Pretrained video autoencoder.}
Following \citet{chen2024probabilistic}, we use a pretrained VQGAN~\citep{esser2021taming} to represent KTH videos in a compact latent space. We download the KTH autoencoder checkpoint released by RIVER~\citep{davtyan2023efficient}, which is also used in their experiments. The encoder maps each $64\times64$ frame to a $4\times8\times8$ latent representation. PhiBE-Flow learns the probability velocity field in this latent space, and the generated latent trajectories are mapped back to video frames using the pretrained decoder.

\paragraph{Evaluation metrics.}
We evaluate video results using Mean Squared Error (MSE),
Peak Signal-to-Noise Ratio (PSNR), Structural Similarity Index Measure
(SSIM), and Fr\'echet Video Distance (FVD). MSE, PSNR, and SSIM evaluate
frame-level reconstruction quality, while FVD measures the discrepancy
between real and generated videos in a learned spatiotemporal feature
space.

\paragraph{MSE.}
The mean squared error is
\begin{equation}
\mathrm{MSE}
=
\frac{1}{N}
\sum_{i=1}^{N}
\left(
\hat{x}_i-x_i
\right)^2,
\end{equation}
where $x_i$ and $\hat{x}_i$ denote the ground-truth and predicted pixel
values, respectively.

\paragraph{PSNR.}
Peak signal-to-noise ratio is computed as
\begin{equation}
\mathrm{PSNR}
=
10\log_{10}
\left(
\frac{\mathrm{MAX}_I^2}{\mathrm{MSE}}
\right),
\end{equation}
where $\mathrm{MAX}_I$ denotes the maximum possible pixel value.

\paragraph{SSIM.}
Structural similarity is computed as
\begin{equation}
\mathrm{SSIM}(x,\hat{x})
=
\frac{
(2\mu_x\mu_{\hat{x}}+C_1)
(2\sigma_{x\hat{x}}+C_2)
}{
(\mu_x^2+\mu_{\hat{x}}^2+C_1)
(\sigma_x^2+\sigma_{\hat{x}}^2+C_2)
},
\end{equation}
where $\mu$, $\sigma^2$, and $\sigma_{x\hat{x}}$ denote the mean,
variance, and covariance, respectively, and $C_1$ and $C_2$ are
constants for numerical stability.

\paragraph{FVD.}
Fr\'echet Video Distance compares the distributions of real and
generated videos in a spatiotemporal feature space:
\begin{equation}
\mathrm{FVD}
=
\left\|
\boldsymbol{\mu}_r-\boldsymbol{\mu}_g
\right\|_2^2
+
\mathrm{Tr}
\left(
\boldsymbol{\Sigma}_r
+
\boldsymbol{\Sigma}_g
-
2
\left(
\boldsymbol{\Sigma}_r
\boldsymbol{\Sigma}_g
\right)^{1/2}
\right),
\end{equation}
where $(\boldsymbol{\mu}_r,\boldsymbol{\Sigma}_r)$ and
$(\boldsymbol{\mu}_g,\boldsymbol{\Sigma}_g)$ denote the mean and
covariance of the corresponding video feature distributions.

We follow the public Google Research FVD evaluation protocol using an
I3D feature extractor pretrained on Kinetics-400. The same pretrained
feature extractor and evaluation implementation are used for all
compared methods.

The FVD implementation is taken from the public Google Research
evaluation code\footnote{\url{https://github.com/google-research/google-research/tree/master/frechet_video_distance}},
with an I3D feature extractor pretrained on
Kinetics-400\footnote{\url{https://github.com/google-deepmind/kinetics-i3d}}.
The corresponding pretrained I3D weights are obtained from the public
evaluation code\footnote{\url{https://onedrive.live.com/download?cid=78EEF3EB6AE7DBCB\&resid=78EEF3EB6AE7DBCB\%21199\&authkey=AApKdFHPXzWLNyI}}
and are kept fixed for all methods.

\end{document}